\newtheorem{theorem}{Theorem}
\newtheorem{definition}{Definition}
\newtheorem{corollary}{Corollary}
\newtheorem{lemma}{Lemma}
\begin{document}

\begin{frontmatter}

\title{Uncertainty-driven Trajectory Truncation for Data Augmentation in Offline Reinforcement Learning}

\author[1]{\fnms{Junjie}~\snm{Zhang}\thanks{Equal contribution. Work done when Jiafei Lyu was an intern at Tencent.}}
\author[1;$*$]{\fnms{Jiafei}~\snm{Lyu}}
\author[2]{\fnms{Xiaoteng}~\snm{Ma}}
\author[2]{\fnms{Jiangpeng}~\snm{Yan}}
\author[2]{\fnms{Jun}~\snm{Yang}}
\author[3]{\fnms{Le}~\snm{Wan}}
\author[1]{\fnms{Xiu}~\snm{Li}\thanks{Corresponding Author. Email: li.xiu@sz.tsinghua.edu.cn.}}

\address[1]{Tsinghua Shenzhen International Graduate School, Tsinghua University}
\address[2]{Department of Automation, Tsinghua University}
\address[3]{IEG, Tencent}

\begin{abstract}
    Equipped with the trained environmental dynamics, model-based offline reinforcement learning (RL) algorithms can often successfully learn good policies from fixed-sized datasets, even some datasets with poor quality. Unfortunately, however, it can not be guaranteed that the generated samples from the trained dynamics model are reliable (e.g., some synthetic samples may lie outside of the support region of the static dataset). To address this issue, we propose \emph{\textbf{T}r\textbf{a}jectory \textbf{T}runcation with \textbf{U}ncertainty} (TATU), which adaptively truncates the synthetic trajectory if the accumulated uncertainty along the trajectory is too large. We theoretically show the performance bound of TATU to justify its benefits. To empirically show the advantages of TATU, we first combine it with two classical model-based offline RL algorithms, MOPO and COMBO. Furthermore, we integrate TATU with several off-the-shelf model-free offline RL algorithms, e.g., BCQ. Experimental results on the D4RL benchmark show that TATU significantly improves their performance, often by a large margin. Code is available \href{https://github.com/pipixiaqishi1/TATU}{here}
\end{abstract}
\end{frontmatter}

\section{Introduction}
Offline reinforcement learning (RL) \cite{Lange2012BatchRL} defines the task of learning the optimal policy from a previously gathered fixed-sized dataset by some unknown process. Offline RL eliminates the need for online data collection, which is an advantage over online RL since interacting with the environment can be expensive, or even dangerous, especially in real-world applications. The advances in offline RL raise the opportunity of scaling RL algorithms in domains like autonomous driving \cite{Sallab2017DeepRL}, healthcare \cite{Riachi2021ChallengesFR}, robotics \cite{Gu2016DeepRL}, etc.

In offline setting, the collected dataset often provides limited coverage in the state-action space. It is therefore hard for the offline agent to well-evaluate out-of-distribution (OOD) state-action pairs, due to the accumulation of extrapolation error \cite{Fujimoto2018OffPolicyDR} with bootstrapping. Many model-free offline RL algorithms have been proposed to avoid OOD actions, such as compelling the learned policy to stay close to the data-collecting policy (behavior policy) \cite{Wu2019BehaviorRO,Kumar2019StabilizingOQ,Fujimoto2018OffPolicyDR}, learning conservative value function \cite{Kumar2020ConservativeQF}, leveraging uncertainty quantification \cite{An2021UncertaintyBasedOR,Wu2021UncertaintyWA}, etc. Nevertheless, these methods often suffer from loss of generalization due to limited size of datasets \cite{Wang2021OfflineRL}.

\begin{figure}
    \centering
    \includegraphics[width=0.95\linewidth]{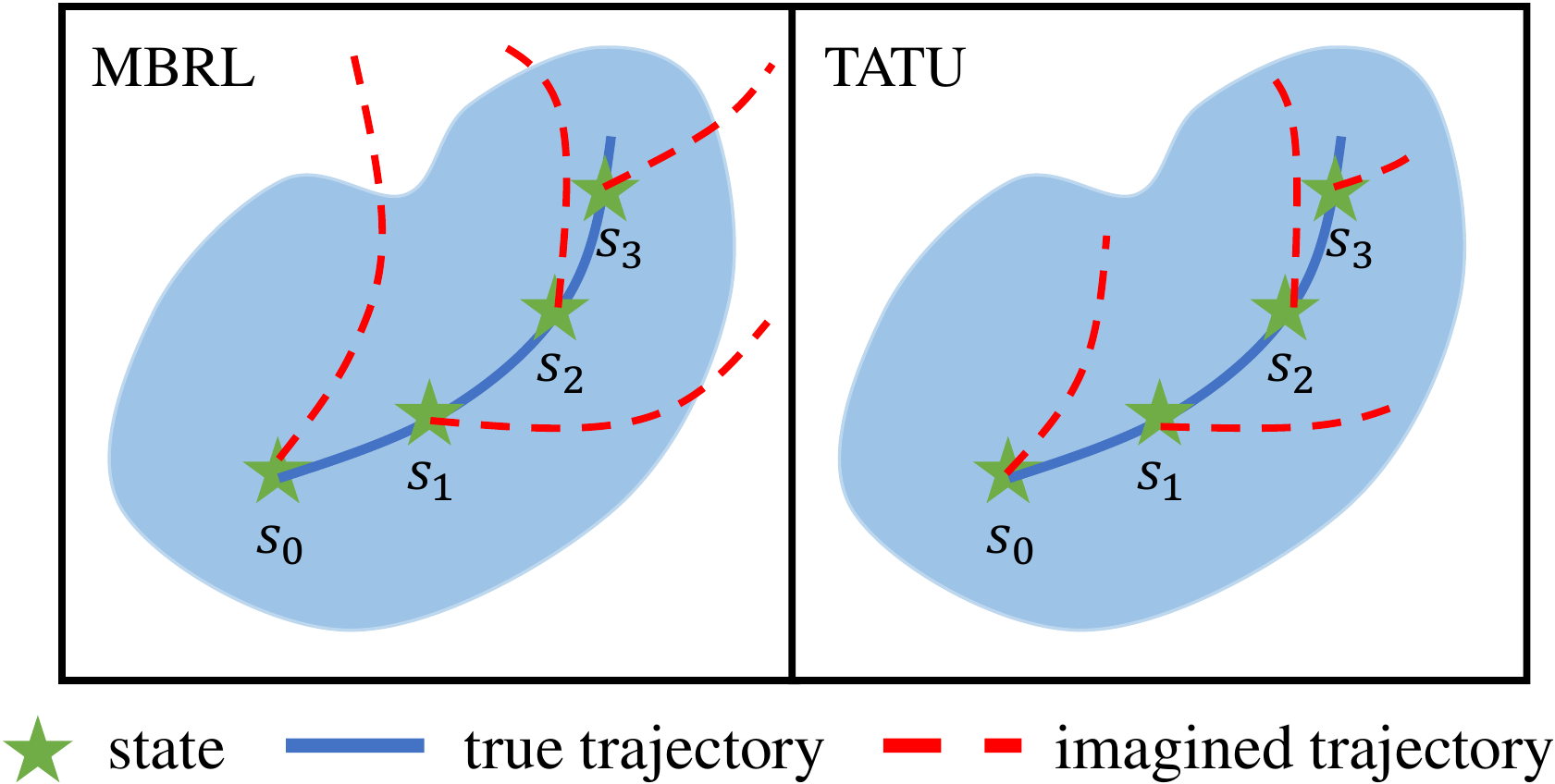}
    \caption{Comparison of TATU against common imagination generation in offline model-based RL (MBRL) methods. TATU truncates the imagined trajectory when the accumulated uncertainty is large and can therefore select good synthetic data in the trajectory. Blue region represents the support region of the static dataset.}
    \vspace{-0.2cm}
    \label{fig:illustration}
\end{figure}

Model-based RL (MBRL) methods, instead, improve the generalization of the agent by generating synthetic imaginations with the learned dynamics model. It has been revealed that directly applying model-based online RL methods like MBPO \cite{Janner2019WhenTT} fails on offline datasets \cite{Yu2020MOPOMO}. MBPO improves the sample efficiency for online model-based RL by introducing the branch rollout method that queries the environmental dynamics model for short rollouts. However, MBPO fails to resolve the issue of extrapolation error in the offline setting. Modern model-based offline RL methods usually leverage methods like uncertainty quantification \cite{Yu2020MOPOMO,Kidambi2020MOReLM}, penalizing value function to enforce conservatism \cite{Yu2021COMBOCO}, planning \cite{Zhan2021ModelBasedOP}, for learning meaningful policies from static logged data. Utilizing the learned dynamics model for offline data augmentation is also explored recently \cite{Wang2021OfflineRL,lyu2022double}. Nevertheless, there is no guarantee that the generated synthetic samples by the learned model are reliable and lie in the span of the offline dataset. \cite{lyu2022double} explores a double check mechanism for checking whether the generated samples are reliable, but it requires training bidirectional dynamics models.

In this paper, we propose trajectory truncation with uncertainty (TATU) for reliable imagined trajectory generation. With the estimation of accumulated uncertainty along the trajectory, the trajectory generation process becomes controllable and reliable and it's easy to adapt to different environments without much tuning of hyperparameters, e.g., rollout horizon. Our key idea is illustrated in Figure \ref{fig:illustration}. As shown, we resort to uncertainty-based methods and truncate the trajectory (i.e., the dynamics model will stop producing imaginations) if the accumulated uncertainty along this trajectory is approaching our tolerance range. By doing so, we \textit{enforce conservatism into synthetic data generation} by adaptively clipping the imagined trajectory and TATU can thus ensure that the dynamics model outputs reliable data. Generally, TATU involves three steps: (a) training the environmental dynamics model; (b) constructing an $\epsilon$-Pessimistic MDP (see Definition \ref{def:pmdp}); and (c) generating imagined trajectories with the $\epsilon$-Pessimistic MDP. We theoretically show that for \textit{any} policy, its performance in the true MDP is lower bounded by the performance in the $\epsilon$-Pessimistic MDP, which sheds light on applying TATU in practice. 

Intuitively, TATU can be combined with model-based offline RL methods like MOPO \cite{Yu2020MOPOMO}, COMBO \cite{Yu2021COMBOCO}, etc. Moreover, TATU can also benefit model-free offline RL methods by training an additional rollout policy and serving trustworthy offline data augmentation.

Our contributions can be summarized below:
\begin{itemize}
    \item We propose a novel uncertainty-based synthetic trajectory truncation method, TATU, which can be easily integrated into existing offline model-based and model-free RL algorithms.
    \item We theoretically justify the advantages of TATU.
    \item We combine TATU with two model-based offline RL algorithms, MOPO and COMBO, and observe significant performance improvement upon them.
    \item We further incorporate TATU with three model-free offline RL algorithms, TD3\_BC, CQL, and BCQ. Empirical results show that TATU markedly boosts their performance across a variety of D4RL benchmark tasks.
\end{itemize}

\section{Related Work}
\noindent\textbf{Model-free Offline RL.} Recent advances in model-free offline RL generally involve importance sampling \cite{Liu2019OffPolicyPG,Nachum2019AlgaeDICEPG}, injecting conservatism into value function \cite{Ma2021ConservativeOD,Cheng2022AdversariallyTA,Kostrikov2021OfflineRL,Kumar2020ConservativeQF,lyu2022mildly}, learning latent actions \cite{Zhou2020PLASLA,chen2022lapo}, forcing the learned policy to be close to the behavior policy \cite{fujimoto2021a,Fujimoto2018OffPolicyDR,Wu2019BehaviorRO,Kumar2019StabilizingOQ,Kostrikov2022OfflineRL}, adopting uncertainty measurement \cite{An2021UncertaintyBasedOR,Wu2021UncertaintyWA,Bai2022PessimisticBF}, using one-step method \cite{Brandfonbrener2021OfflineRW}, etc. Despite these advances, the generalization capability of the model-free offline RL algorithms is often limited due to the partial coverage of the dataset in the entire state-action space \cite{Wang2021OfflineRL,lyu2022double}.

\noindent\textbf{Model-based Offline RL.} Model-based offline RL methods focus on policy learning under a learned dynamics model. To mitigate extrapolation error \cite{Fujimoto2018OffPolicyDR}, several methods do not allow the learned policy to visit regions where the discrepancy between the learned dynamics and the true dynamics is large \cite{Argenson2020ModelBasedOP,Kidambi2020MOReLM,Yu2020MOPOMO,Yang2022ParetoPP}. Some researchers seek to constrain the learned policy to be similar to the behavior policy \cite{Cang2021BehavioralPA,Matsushima2020DeploymentEfficientRL,Swazinna2020OvercomingMB}. Also, sequential modeling is explored in offline RL \cite{Chen2021DecisionTR,Janner2021ReinforcementLA}. Some recent advances propose to leverage model-generated synthetic data for better training \cite{Yu2021COMBOCO,Wang2021OfflineRL,lyu2022double}, or to enhance model training \cite{Lu2021RevisitingDC,Lee2021RepresentationBO,Yang2022AUF,Rigter2022RAMBORLRA,Guo2022ModelBasedOR}, etc. We, instead, aim at offering higher-quality synthetic data for offline training via an uncertainty-based trajectory truncation method. The most relevant to our work are M2AC \cite{Pan2020TrustTM}, MOReL \cite{Kidambi2020MOReLM}, and MOPP \cite{Zhan2021ModelBasedOP}. MOReL requires the generated single sample (instead of the trajectory) at each step to lie in a safe region. MOPP leverages offline planning and sorts the \textit{entire} trajectory for high rewarding transitions based on uncertainty after planning is done. Both MOReL and MOPP optimize policy in a model-based manner. M2AC is an online MBRL algorithm that rejects the transitions (or trajectories) based on the instance uncertainty and modifies the way of measuring target value. TATU, instead, only serve data augmentation and its imagination generation process can be isolated from policy optimization. TATU truncates the synthetic trajectory based on the \emph{accumulated} uncertainty, and adds penalty to the generated samples. TATU \textit{early stops} the imagined trajectory and puts the whole truncated imagined trajectory into the model buffer for policy optimization. 


\section{Background}
\label{sec:background}
Reinforcement learning (RL) problems can be formulated by a Markov Decision Process (MDP), which consistes of $\mathcal{M}=\langle\mathcal{S},\mathcal{A}, r, P, \rho_0 ,\gamma\rangle$, where $\mathcal{S}$ denotes the state space, $\mathcal{A}$ is the action space, $r:\mathcal{S} \times \mathcal{A} \rightarrow \mathbb{R}$ is the scalar reward function, $P(s^\prime|s,a)$ is the transition probability, $\rho_0$ is the initial state distribution, $\gamma\in[0,1)$ is the discount factor. The goal of RL is to find a policy $\pi: s\rightarrow \Delta(\mathcal{A})$ such that the expected discounted cumulative long-term rewards with states sampled according to $\rho_0$ is maximized:
\begin{equation}
    \label{eq:return}
    \max_\pi J_{\rho_0}(\pi,\mathcal{M}) := \mathbb{E}_{s\sim\rho_0,\pi}\left[ \sum_{t=0}^\infty \gamma^t r(s_t,a_t) \bigg|s_0 = s \right],
\end{equation}
\noindent where $\Delta(\cdot)$ denotes the probability simplex. We further denote the optimal policy $\pi^* := \arg\max_\pi J_{\rho_0}(\pi,\mathcal{M})$.

In offline RL, the agent can not interact with the environment, and can only get access to a previously logged dataset $\mathcal{D}_{\rm env}=\{(s,a,r,s^\prime)\}$. The data can be collected by one or a mixture of behavior policy $\mu$, which are often unknown. Offline RL aims at learning the optimal batch-constraint policy. Model-based offline RL methods learn the environmental dynamics model $\hat{P}(\cdot|s,a)$ solely from the dataset, and leverage the learned model to optimize the policy. Typically, the transition probability model $\hat{P}(\cdot|s,a)$ is trained using maximum likelihood estimation. A reward function $\hat{r}(s,a)$ can also be learned if it is unknown. We assume the reward function is known in this work. Then we can construct a model MDP $\hat{\mathcal{M}}=\langle \mathcal{S},\mathcal{A}, r,\hat{P},\hat{\rho}_0,\gamma\rangle$. During training, the synthetic samples generated by the learned dynamics model will be put into the model buffer $\mathcal{D}_{\rm model}$ and the policy is finally updated using data sampled from $\mathcal{D}_{\rm env}\cup \mathcal{D}_{\rm model}$.

\section{Trajectory Truncation with Uncertainty}
In this section, we first provide some theoretical insights of our proposed trajectory truncation with uncertainty (TATU). Then, we present the detailed practical algorithm for TATU.
\subsection{Theoretical Analysis}
To determine whether the generated synthetic trajectory is reliable, we define the accumulated uncertainty along the trajectory (AUT) and truncation indicator below.
\begin{definition}
\label{def:usad}
Given a state-action sequence $\{s_i, a_i\}_{i=0}^h$, define accumulated uncertainty along the trajectory at step $t$ as:
\begin{equation}
    U_t(s_t,a_t) = \sum_{i=0}^t \gamma^t D_{\mathrm{TV}}(\hat{P}(\cdot|s_i,a_i),P(\cdot|s_i,a_i)).
\end{equation}
Furthermore, we define the truncation indicator as:
\begin{equation}
    T^\epsilon_t(s,a) = \begin{cases} 0, \quad \mathrm{if}\, U_t(s_t,a_t)\le\epsilon, \\ 1, \quad \mathrm{otherwise,}  \end{cases}
\end{equation}
\end{definition}
\noindent where $\hat{P}(\cdot|s,a)$ is the trained environmental transition probability, $P(\cdot|s,a)$ is the true dynamics, and $D_{\rm TV}(p,q)$ is the total variation distance between two distributions $p,q$. Intuitively, $U_t(s,a)$ measures how much uncertainty is accumulated along the imagined trajectory and $T^\epsilon_t(s,a)$ illustrates whether the trajectory ought to be truncated or not. If the accumulated uncertainty is too large (e.g., exceeds the threshold $\epsilon$), then the synthetic trajectory will be truncated and no more forward imagination will be included in the model buffer. We denote $u(s,a) = D_{\mathrm{TV}}(\hat{P}(\cdot|s,a),P(\cdot|s,a))$ as the uncertainty measurement for state-action pair $(s,a)$. As a valid uncertainty measurement, $u(s,a)$ ought to satisfy $0\le u(s,a) \le u_0<\infty$ for some positive constant $u_0$. Obviously, our defined uncertainty measurement above meets this requirement since the total variation distance is bounded. We denote the maximum uncertainty is $u_{\rm max}$, then we have $|u(s,a)|\le u_{\rm max}\le 1<\infty,\forall\, s,a$.  It is easy to find that AUT $U_t(s_t,a_t) = \sum_{i=0}^t \gamma^t u(s_i,a_i)$.

We further define the $\epsilon$-Pessimistic MDP as follows:

\begin{definition}[$\epsilon$-Pessimistic MDP]
\label{def:pmdp}
The $\epsilon$-Pessimistic MDP is defined by the tuple $\hat{\mathcal{M}}_p = \langle \mathcal{S},\mathcal{A},r_p, \hat{P}_p, \hat{\rho}_0, \gamma \rangle$, where $\mathcal{S}, \mathcal{A}$ are state space and action space in the actual MDP $\mathcal{M}$, $\hat{\rho}_0$ is the initial state distribution learned from the dataset $\mathcal{D}$. The transition dynamics and reward function for $\hat{\mathcal{M}}_p$ gives:
\begin{equation}
    \label{eq:pmdpprob}
    \hat{P}_p(\cdot|s_t,a_t)=\begin{cases} 0, \qquad \qquad \mathrm{if}\, T^\epsilon_t(s_t,a_t)=1, \\
    \hat{P}(\cdot|s_t,a_t),\qquad \mathrm{otherwise,}  \end{cases}
\end{equation}
\begin{equation}
    \label{eq:pmdpreward}
    r_p(s_t,a_t) = \begin{cases} r(s_t,a_t) - \lambda u(s_t,a_t)-\kappa, \mathrm{if}\, T^\epsilon_t(s_t,a_t)=1, \\ 
    r(s_t,a_t)-\lambda u(s_t,a_t),\qquad\mathrm{otherwise,} \end{cases}
\end{equation}
\end{definition}
\noindent where $\kappa$ is the penalty for truncation state, and $\lambda$ is the uncertainty penalty coefficient for each state-action pair $(s,a)$. Note that the summation of $\hat{P}_p$ is not 1, while it is valid because for a fixed offline dataset, the cases of $T_t^\epsilon(s_t,a_t)=1$ are almost sure. Hence, we can rearrange the probability distribution of $\hat{P}_p(\cdot|s_t,a_t)$ by weighting with a constant. By introducing the $\epsilon$-Pessimistic MDP formulation, we involve conservatism into the reward function and the imagined state-action sequence. The trajectory imagination will be terminated if the truncation indicator $T_t^\epsilon=1$ i.e., the accumulated uncertainty $U_t(s_t,a_t)$ exceeds $\epsilon$ at horizon $t$. An additional reward penalty $\kappa$ will be allocated to the termination state in the synthetic trajectory.

Assume that the reward function is bounded almost surely, i.e., $|r(s,a)|\le r_{\rm max},\forall\, s,a$. We now derive the performance bound for our $\epsilon$-Pessimistic MDP. Due to the space limit, all proofs are omitted and deferred to Appendix \ref{sec:missingproofs}.
\begin{theorem}
\label{theo:performancebound}
Denote $\bar{r}=r_{\rm max}+\lambda u_{\rm max}+\kappa$. Then the return of any policy $\pi$ in the $\epsilon$-Pessimistic MDP $\hat{\mathcal{M}}_p$ and its original MDP $\mathcal{M}$ satisfies:
\begin{equation}
    \begin{aligned}
        J_{\hat{\rho}_0}(\pi,\hat{\mathcal{M}}_p) \ge &J_{\rho_0}(\pi,\mathcal{M}) - \dfrac{2\bar{r}}{1-\gamma}\cdot D_{\rm TV}(\rho_0,\hat{\rho}_0) \\
        & - \bar{r} \cdot \epsilon - \dfrac{\bar{r}}{1-\gamma},
    \end{aligned}
\end{equation}
\begin{equation}
    \begin{aligned}
    J_{\hat{\rho}_0}(\pi,\hat{\mathcal{M}}_p) \le &J_{\rho_0}(\pi,\mathcal{M}) + \dfrac{2\bar{r}}{1-\gamma}\cdot D_{\rm TV}(\rho_0,\hat{\rho}_0) + \bar{r}\cdot \epsilon.
    \end{aligned}
\end{equation}
\end{theorem}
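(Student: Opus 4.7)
The plan is to bound $J_{\hat\rho_0}(\pi,\hat{\mathcal M}_p)-J_{\rho_0}(\pi,\mathcal M)$ by isolating three independent sources of discrepancy: (a) the initial-state mismatch between $\rho_0$ and $\hat\rho_0$; (b) the dynamics mismatch between $\hat P$ and $P$, which by construction accumulates only along the pre-truncation segment of any rollout; and (c) the reward modification $r_p-r$ together with the truncation itself. I would work at the trajectory level so that the random stopping time $\tau=\min\{t\ge 0:T_t^\epsilon(s_t,a_t)=1\}$ is exposed explicitly, and lift to the augmented state $(s,U_t)$ wherever the Markov property of $\hat{\mathcal M}_p$ is needed.

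First I would peel off (a) with the standard Lipschitz-in-initial-distribution bound $|J_{\hat\rho_0}(\pi,\mathcal M)-J_{\rho_0}(\pi,\mathcal M)|\le\tfrac{2\bar r}{1-\gamma}D_{\rm TV}(\rho_0,\hat\rho_0)$, which follows from $|V^\pi_{\mathcal M}(\cdot)|\le\bar r/(1-\gamma)$ together with the dual representation of total variation. With the starting distribution fixed to $\hat\rho_0$, I would then introduce the intermediate ``truncated true MDP'' $\mathcal M_\tau$ (dynamics $P$, rewards $r$, rollouts absorbed at $\tau$) and write, abbreviating $J=J_{\hat\rho_0}$,
\begin{equation*}
J(\pi,\hat{\mathcal M}_p)-J(\pi,\mathcal M)=\bigl[J(\pi,\hat{\mathcal M}_p)-J(\pi,\mathcal M_\tau)\bigr]+\bigl[J(\pi,\mathcal M_\tau)-J(\pi,\mathcal M)\bigr].
\end{equation*}
The truncation-tail term has magnitude at most $\bar r/(1-\gamma)$ since $|r|\le\bar r$, and careful sign tracking shows that it only loads into the lower bound's $\bar r/(1-\gamma)$ slack. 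For the remaining bracket, I would apply a simulation-lemma telescoping to bound the dynamics contribution by $\bar r\cdot\mathbb E\bigl[\sum_{t<\tau}\gamma^tD_{\rm TV}(\hat P(\cdot|s_t,a_t),P(\cdot|s_t,a_t))\bigr]\le\bar r\,\epsilon$ using Definition~\ref{def:usad}, while the reward-modification contribution $-\lambda\mathbb E[\sum_{t\le\tau}\gamma^t u]-\gamma^\tau\kappa$ is nonpositive with magnitude at most $\bar r/(1-\gamma)$, which again only loads into the lower bound.

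The main obstacle is making the telescoping rigorous when $\tau$ is a random, history-dependent stopping time. The cleanest route is to work on the augmented state space $(s,U)$: on this space $\hat{\mathcal M}_p$ is genuinely Markov with absorbing set $\{U>\epsilon\}$, and applying the classical simulation lemma to the augmented MDP (comparing $\hat P$ and $P$) immediately yields the $\bar r\cdot\epsilon$ dynamics bound, since the AUT along any un-absorbed trajectory is at most $\epsilon$ by construction. Summing the three contributions with the sign tracking described above then yields the claimed two-sided bound.
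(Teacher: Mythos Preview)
Your overall strategy mirrors the paper's: introduce an intermediate MDP that uses the \emph{true} (truncated) dynamics, bound the dynamics mismatch against $\hat P$ by the accumulated-TV budget $\epsilon$, and absorb the reward modification together with the post-truncation tail into the one-sided $\bar r/(1-\gamma)$ term. The paper's intermediate object is $\mathcal M_p=\langle\mathcal S,\mathcal A,r_p,P_p,\rho_0,\gamma\rangle$ (true dynamics, \emph{pessimistic} reward, true initial), and the split is $[J_{\hat\rho_0}(\pi,\hat{\mathcal M}_p)-J_{\rho_0}(\pi,\mathcal M_p)]+[J_{\rho_0}(\pi,\mathcal M_p)-J_{\rho_0}(\pi,\mathcal M)]$: the first bracket carries both the initial-distribution and dynamics terms, the second carries both reward modification and truncation tail. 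Your version instead peels off the initial distribution on $\mathcal M$ first, then uses $\mathcal M_\tau$ with the \emph{original} reward $r$, so the reward modification lands in your first bracket and the truncation tail in your second. This is a harmless reshuffling, and your observation that one should pass to the augmented state $(s,U)$ to make the stopping time Markov is a nice point the paper leaves implicit.

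There is one genuine bookkeeping hazard in your split. You bound the truncation tail by $\bar r/(1-\gamma)$ and then, separately, bound the reward-modification term again by $\bar r/(1-\gamma)$. Adding these gives $2\bar r/(1-\gamma)$ in the lower bound, not the stated $\bar r/(1-\gamma)$. The paper sidesteps this by keeping both contributions inside a single lemma (its $\mathcal M_p$ already uses $r_p$), where the reward shift costs at most $(\lambda u_{\max}+\kappa)/(1-\gamma)$ and the tail costs at most $r_{\max}/(1-\gamma)$; only the \emph{sum} is $\bar r/(1-\gamma)$. If you keep your decomposition, you must use these sharper constants---$r_{\max}/(1-\gamma)$ for the tail (it involves only $r$) and $(\lambda u_{\max}+\kappa)/(1-\gamma)$ for the reward modification---so that they add back to $\bar r/(1-\gamma)$. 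Otherwise the lower bound you obtain will be weaker than the one claimed.
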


\noindent\textbf{Remark:} Our method incurs a tighter performance bound compared to MOReL. To be specific, our methods guarantees $|J_{\hat{\rho}_0}(\pi,\hat{\mathcal{M}}_p)-J_{\rho_0}(\pi,\mathcal{M})| \le \mathcal{O}\left(\frac{r_{\rm max}}{1-\gamma}\right)$ thanks to the uncertainty-based trajectory truncation. However, MOReL only ensures that $|J_{\hat{\rho}_0}(\pi,\hat{\mathcal{M}}_p)-J_{\rho_0}(\pi,\mathcal{M})| \le \mathcal{O}\left(\frac{r_{\rm max}}{(1-\gamma)^2}\right)$.

Theorem \ref{theo:performancebound} indicates that the return difference for any policy $\pi$ in the true MDP and the $\epsilon$-Pessimistic MDP relies on mainly three parts: (1) the total variation distance between the learned and true initial state distribution $D_{\rm TV}(\rho_0,\hat{\rho}_0)$; (2) the threshold for accumulated uncertainty $\epsilon$; (3) the upper bound for the pessimistic reward function (since $|r_p|\le \bar{r}$ almost surely). As an immediate corollary, we have
\begin{corollary}
\label{coro:suboptimal}
If the policy in the $\epsilon$-Pessimistic MDP is $\delta_\pi$ sub-optimal, i.e., $J_{\hat{\rho}_0}(\pi,\hat{\mathcal{M}}_p)\ge J_{\hat{\rho}_0}(\pi^*,\hat{\mathcal{M}}_p) - \delta_\pi$, then we have
\begin{equation*}
    \begin{aligned}
    J_{\rho_0}(\pi^*,\mathcal{M}) - J_{\rho_0}(\pi,\mathcal{M}) \le &\delta_\pi + \dfrac{4\bar{r}}{1-\gamma}\cdot D_{\rm TV}(\rho_0,\hat{\rho}_0) \\
    & +2\bar{r} \epsilon + \dfrac{\bar{r}}{1-\gamma}.
\end{aligned}
\end{equation*}
\end{corollary}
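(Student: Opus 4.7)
The plan is to chain the two sides of Theorem~\ref{theo:performancebound} with the $\delta_\pi$-sub-optimality hypothesis. The quantity we must control, $J_{\rho_0}(\pi^*,\mathcal{M})-J_{\rho_0}(\pi,\mathcal{M})$, lives in the true MDP, whereas the hypothesis lives in the pessimistic MDP; Theorem~\ref{theo:performancebound} is exactly the bridge between the two, so the argument should be a straightforward algebraic manipulation rather than a new probabilistic estimate.

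First I would rearrange each side of Theorem~\ref{theo:performancebound} so that the true-MDP return sits on the left. The lower bound gives $J_{\rho_0}(\pi,\mathcal{M}) \le J_{\hat{\rho}_0}(\pi,\hat{\mathcal{M}}_p) + \tfrac{2\bar{r}}{1-\gamma}D_{\mathrm{TV}}(\rho_0,\hat{\rho}_0) + \bar{r}\epsilon + \tfrac{\bar{r}}{1-\gamma}$, and the upper bound gives $J_{\rho_0}(\pi,\mathcal{M}) \ge J_{\hat{\rho}_0}(\pi,\hat{\mathcal{M}}_p) - \tfrac{2\bar{r}}{1-\gamma}D_{\mathrm{TV}}(\rho_0,\hat{\rho}_0) - \bar{r}\epsilon$. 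Since Theorem~\ref{theo:performancebound} holds for \emph{any} policy, I would instantiate the first inequality at $\pi^*$ (to upper bound $J_{\rho_0}(\pi^*,\mathcal{M})$) and the second at the candidate $\pi$ (to lower bound $J_{\rho_0}(\pi,\mathcal{M})$).

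Subtracting the two instantiations yields
\begin{equation*}
J_{\rho_0}(\pi^*,\mathcal{M}) - J_{\rho_0}(\pi,\mathcal{M}) \le \bigl[J_{\hat{\rho}_0}(\pi^*,\hat{\mathcal{M}}_p) - J_{\hat{\rho}_0}(\pi,\hat{\mathcal{M}}_p)\bigr] + \frac{4\bar{r}}{1-\gamma}D_{\mathrm{TV}}(\rho_0,\hat{\rho}_0) + 2\bar{r}\epsilon + \frac{\bar{r}}{1-\gamma},
\end{equation*}
and the bracketed pessimistic-MDP gap is at most $\delta_\pi$ by the sub-optimality hypothesis. Substituting produces precisely the claimed bound.

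I do not anticipate a real obstacle, since the substantive work has already been done in Theorem~\ref{theo:performancebound}; the only things to verify are that the constants line up (the $\tfrac{\bar{r}}{1-\gamma}$ slack should appear exactly once, since it is present only on the lower-bound side of the theorem, while the $D_{\mathrm{TV}}$ and $\epsilon$ terms should each double, as both instantiations contribute them) and that the hypothesis is deployed in the correct direction, so that $J_{\hat{\rho}_0}(\pi^*,\hat{\mathcal{M}}_p)-J_{\hat{\rho}_0}(\pi,\hat{\mathcal{M}}_p)\le\delta_\pi$ can legitimately be plugged in.
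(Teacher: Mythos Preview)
Your proposal is correct and follows essentially the same approach as the paper: both apply the two sides of Theorem~\ref{theo:performancebound} (one side at $\pi^*$, the other at $\pi$) and then invoke the $\delta_\pi$-sub-optimality hypothesis. The paper chains the inequalities sequentially starting from $J_{\rho_0}(\pi,\mathcal{M})$, whereas you bound each true-MDP return separately and subtract, but the algebra and the constants are identical.
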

This corollary illustrates that the sub-optimality ($J(\pi^*,\mathcal{M}) - J(\pi,\mathcal{M})$) of the policy in the true MDP is bounded by the sub-optimality of the policy trained with the $\epsilon$-Pessimistic MDP. If the sub-optimality of the policy learned in the pessimistic MDP is small (i.e., $\delta_\pi$ is small), then the sub-optimality of the policy in the true MDP will also be small. Furthermore, if the dataset covers all possible transitions, i.e., the dataset is large enough, then it is easy to find both $D_{\rm TV}(\rho_0,\hat{\rho}_0)$ and $\epsilon$ approach 0 since all of the imagined samples will lie in the support region of the dataset with high probability. Naturally, the above upper bound thus can be further simplified.
\begin{corollary}
    \label{coro:simplified}
    If the dataset is large enough, then we have
    \begin{equation*}
    \begin{aligned}
    J_{\rho_0}(\pi^*,\mathcal{M}) - J_{\rho_0}(\pi,\mathcal{M}) \le \delta_\pi + \dfrac{\bar{r}}{1-\gamma}.
    \end{aligned}
    \end{equation*}
\end{corollary}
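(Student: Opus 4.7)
The plan is to derive Corollary~\ref{coro:simplified} as an immediate consequence of Corollary~\ref{coro:suboptimal} by showing that the two data-dependent terms in that bound, namely $\frac{4\bar{r}}{1-\gamma}\cdot D_{\rm TV}(\rho_0,\hat{\rho}_0)$ and $2\bar{r}\epsilon$, both vanish in the large-dataset limit. I would therefore start by restating the bound from Corollary~\ref{coro:suboptimal} and then separately handle each of these two terms.

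For the first term, I would argue that $\hat{\rho}_0$ is the empirical initial-state distribution estimated from the samples of $\mathcal{D}_{\rm env}$. Standard concentration results (for instance, for empirical distributions over finite or well-behaved state spaces) give that $D_{\rm TV}(\rho_0,\hat{\rho}_0) \to 0$ as the number of initial-state samples grows, so in the limit this term contributes nothing. For the second term, I would appeal to the definition of $u(s,a) = D_{\rm TV}(\hat P(\cdot\mid s,a), P(\cdot\mid s,a))$: when the dataset sufficiently covers the relevant state-action region, maximum-likelihood estimation of the dynamics drives $u(s,a)\to 0$ pointwise on that region. Consequently, any accumulated uncertainty $U_t(s_t,a_t)=\sum_{i=0}^t \gamma^i u(s_i,a_i)$ along a trajectory whose visited pairs lie in the support of the data can be made arbitrarily small, which means the threshold $\epsilon$ needed to prevent truncation can itself be taken to $0$. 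Substituting $D_{\rm TV}(\rho_0,\hat{\rho}_0)=0$ and $\epsilon=0$ into the bound of Corollary~\ref{coro:suboptimal} yields precisely
\begin{equation*}
J_{\rho_0}(\pi^*,\mathcal{M}) - J_{\rho_0}(\pi,\mathcal{M}) \le \delta_\pi + \dfrac{\bar{r}}{1-\gamma},
\end{equation*}
as desired.

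The hard part, and the only place this proof is more than bookkeeping, is formalizing what ``large enough'' means, since in general the support of $\rho_0$ and of the behavior-induced state-action distribution may be infinite and the dynamics model class may not be well-specified. I would handle this by explicitly stating the qualitative assumption already implicit in the corollary's hypothesis, namely that as the dataset size grows, (i) the empirical initial-state distribution converges in total variation to $\rho_0$ and (ii) the learned dynamics $\hat P$ converges in total variation to $P$ on the set of state-action pairs reachable under $\pi$. Under these assumptions the argument collapses to the two limits above, and no further estimates are needed beyond what is already established in Theorem~\ref{theo:performancebound} and Corollary~\ref{coro:suboptimal}.
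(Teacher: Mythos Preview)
Your proposal is correct and follows essentially the same route as the paper: the paper's proof simply observes that with a sufficiently large dataset all generated transitions are in-distribution, so $D_{\rm TV}(\rho_0,\hat\rho_0)\to 0$ and $\epsilon\to 0$, and then substitutes these into Corollary~\ref{coro:suboptimal}. Your additional remarks about concentration of the empirical initial distribution and consistency of the MLE dynamics are more explicit than what the paper writes, but the underlying argument is identical.
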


Corollary \ref{coro:simplified} also reveals that even when the dataset is large, the performance deviation between the optimal policy and the learnt policy is determined by the sub-optimality in the pessimistic MDP.

We then present the algorithmic details for TATU below.

\subsection{Practical Algorithm}
The practical implementation of TATU can be generally divided into three steps:

\noindent\textbf{Step 1: Training Dynamics Models:} Following prior work \cite{Janner2019WhenTT}, we train the dynamics model $\hat{P}(\cdot|s,a)$ with a neural network $\hat{p}_\psi(s^\prime|s,a)$ parameterized by $\psi$ that produces a Gaussian distribution over the next state, i.e., $\hat{p}_\psi(s^\prime|s,a)=\mathcal{N}(\mu_\theta(s,a), \Sigma_\phi(s,a)), \psi = \{\theta,\phi\}$. We train an ensemble of $N$ dynamics models $\{\hat{p}_{\psi}^i=\mathcal{N}(\mu_\theta^i,\Sigma_\phi^i)\}_{i=1}^N$. We denote the loss function for training the forward dynamics model as $\mathcal{L}_\psi$. Then each model in the ensemble is trained independently via maximum log-likelihood:
\begin{equation}
    \label{eq:likelihood}
    \mathcal{L}_\psi = \mathbb{E}_{(s,a,r,s^\prime)\sim\mathcal{D}}\left[ -\log \hat{p}_\psi(s^\prime|s,a) \right].
\end{equation}
We model the difference in the current state and next state, i.e., $\mu_\theta(s,a) = s + \delta_\theta(s,a)$ to ensure local continuity.

\noindent\textbf{Step 2: Constructing $\epsilon$-Pessimistic MDP:} After the environmental dynamics model is well-trained, we use it to construct the $\epsilon$-Pessimistic MDP by following Equation (\ref{eq:pmdpprob}) and (\ref{eq:pmdpreward}) in Definition \ref{def:pmdp}. It is then important to decide which uncertainty measurement to use in practice (we cannot use $D_{\rm TV}(\hat{P}(\cdot|s,a),P(\cdot|s,a))$ as true transition probability $P(s^\prime|s,a)$ is often unknown and inaccessible), and how to decide the uncertainty threshold $\epsilon$. As a valid and reasonable uncertainty measurement, we require $u(s,a)$ can capture how uncertain the state-action pair $(s,a)$ is, and satisfy $|u(s,a)|\le u_{\rm max} < \infty$ almost surely for any $(s,a)$. MOReL \cite{Kidambi2020MOReLM} adopts the ensemble discrepancy as the uncertainty quantifier, i.e., $u(s,a) = \max_{i,j}\|\mu^i_\theta(s,a) - \mu^j_\theta(s,a)\|_2$, where $\|\cdot\|_2$ is the L2-norm and $\mu^i_\theta(s,a), \mu_\theta^j(s,a), i,j\in\{1,\ldots,N\}$ are mean vectors of the Gaussian distributions in the ensemble. MOPO \cite{Yu2020MOPOMO} utilizes the maximum standard deviation of the learned models in the ensemble as the uncertainty estimator, i.e., $u(s,a) = \max_{i=1}^N\|\Sigma_\phi^i(s,a)\|_F$, where $\|\cdot\|_F$ is the Frobenius-norm. $\|{\bf A}\|_F = \sqrt{\sum_{i=1}^m\sum_{j=1}^n|a_{ij}|}$ for matrix ${\bf A}$ with size $m\times n$. Empirically, we do not observe much performance difference with these quantifiers in our experiments. We then use an MOPO-style uncertainty estimator in this work. Such uncertainty measurement generally reveals whether the state-action pair is reliable, and satisfies $|u(s,a)|\le u_{\rm max}<\infty$. One can also enforce this by clipping $u(s,a)$ to $[-u_{\rm max}, u_{\rm max}]$ with a manually set upper bound.

As for the uncertainty threshold $\epsilon$, one na\"ive way is to set a constant threshold. However, this often lacks flexibility and the best threshold may be quite varied for different datasets. We instead propose to measure the uncertainty on \textit{all of the samples} in the dataset $\mathcal{D}$. We then take the maximum transition uncertainty in the dataset and set the uncertainty threshold based on:
\begin{equation}
    \label{eq:uncertaintythreshold}
    \epsilon = \dfrac{1}{\alpha} \max_{i\in[|\mathcal{D}|]} u(s_i,a_i)=\dfrac{1}{\alpha} \max_{i\in[|\mathcal{D}|]} \max_{j\in[N]} \|\Sigma^j_\phi(s_i,a_i)\|_F,
\end{equation}
\noindent where $[k] = \{1,\ldots,k\}$, $(s_i,a_i)\sim\mathcal{D},i\in[|\mathcal{D}|]$, and $\alpha\in\mathbb{R}_+$ is the key hyperparameter that controls the strength of the threshold. By using a larger $\alpha$, TATU becomes more conservative and includes fewer imagined samples into the model buffer $\mathcal{D}_{\rm model}$. While if a small $\alpha$ is used, TATU exhibits more tolerance to generated data. The total uncertainty is controlled by the maximum uncertainty in the real dataset (i.e., $\max_{i=1}^{|\mathcal{D}|}u(s_i,a_i)$), which usually varies with different datasets. This allows a flexible and reasonable uncertainty threshold for adaptively selecting good imaginations.

\noindent\textbf{Step 3: Conservative Trajectory Generation:} Once the uncertainty threshold $\epsilon$ is ready, we can generate conservative trajectories. We first sample starting state $s_0$ from the static dataset $\mathcal{D}$ and generate an imagined trajectory $\hat{\tau} = \{\hat{s}_j,a_j,r_j,\hat{s}_{j+1}\}_{j=0}^{h-1}$ with the learned dynamics $\hat{p}_\psi$, where $h$ is the horizon length. We simultaneously calculate the accumulated uncertainty of the imagined trajectory $U_t(s_t,a_t)$ at horizon $t$, $t\in[1,h]$. The generated trajectory is truncated at horizon $t$ if $U_t(s_t,a_t)>\epsilon$. In deep RL, we usually sample a mini-batch of data from the real dataset $\mathcal{D}$ of size $B$ with bootstrapping. By leveraging the learned dynamics model, we then can get $B$ imagined trajectories, provably with different lengths (as the accumulated uncertainty varies with different starting states). We can incorporate TATU with existing popular model-based offline RL algorithms such as MOPO \cite{Yu2020MOPOMO}, COMBO \cite{Yu2021COMBOCO}, etc. Here, the actions in the imagined trajectory are generated with the learned policy $\pi$.

\begin{algorithm}[tb]
\caption{Trajectory Truncation with Uncertainty (TATU)}
\label{alg:algtatumodelbased}
\begin{algorithmic}[1] 
\STATE \textbf{Require:} Offline dataset $\mathcal{D}$, number of iterations $N$, horizon $h$, reward penalty coefficient $\lambda$, termination penalty $\kappa$
\STATE Initialize model buffer $\mathcal{D}_{\rm model} \leftarrow \emptyset$
\STATE Train the ensemble dynamics models $\{\hat{p}^i_\psi(s^\prime|s,a) = \mathcal{N}(\mu_\theta^i(s,a), \Sigma_\phi^i(s,a))\}_{i=1}^N$ on $\mathcal{D}$ using Equation (\ref{eq:likelihood})
\STATE Calculate the truncation threshold $\epsilon$ using Equation (\ref{eq:uncertaintythreshold})
\STATE (Optional) Train a rollout policy via Equation (\ref{eq:cvae})
\FOR{epoch from 1 to $N$}
\STATE Sample state $s_0$ from dataset $\mathcal{D}$
\FOR{$j$ in 1 to $h$}
\STATE Sample an action $a_j\sim\pi(\cdot|s_j)$ // learned policy
\STATE (Optional) Draw an action $a_j$ from the rollout policy
\STATE Randomly pick dynamics $\hat{p}$ from $\{\hat{p}_\psi^i\}_{i=1}^N$ and sample next state $s_{j+1}\sim\hat{p}$
\STATE Calculate reward according to Equation (\ref{eq:pmdpreward})
\STATE Calculate accumulated uncertainty along the trajectory $U_j = \sum_{k=1}^j \left[\max_{i=1}^N \|\Sigma_\phi^i(s_k,a_k)\|_F\right]$
\IF{$U_j \le \epsilon$}
\STATE Put the imagined transition $(s_j,a_j,r_j,s_{j+1})$ into the model buffer $\mathcal{D}_{\rm model}$
\ELSE
\STATE break \qquad // Truncate the synthetic trajectory
\ENDIF
\ENDFOR
\STATE Sample data from $\mathcal{D}\cup\mathcal{D}_{\rm model}$ and use the base algorithm (e.g., SAC, CQL, BCQ) to optimize policy $\pi$
\ENDFOR
\end{algorithmic}
\end{algorithm}

\begin{table*}
\caption{Normalized average score comparison of TATU+MOPO and TATU+COMBO against their base algorithms and some recent baselines on the D4RL MuJoCo ``-v2" dataset. half = halfcheetah, r = random, m = medium, m-r = medium-replay, m-e = medium-expert. Each algorithm is run for 1M gradient steps with 5 different random seeds. We report the final performance. $\pm$ captures the standard deviation. We bold the top 2 score of the left part and the best score of the right part.}
\label{tab:resultsonmodelbased}
\centering
\begin{tabular}{ccccc|cccccc}
\toprule
Task Name                 & TATU+MOPO   & MOPO      & TATU+COMBO   & COMBO & BC & CQL & IQL & DT & MOReL \\ 
\midrule
half-r        & 33.3$\pm$2.6  & \textbf{35.9} & 29.3$\pm$2.7   & \textbf{38.8} & 2.2 & 17.5 & 13.1 & - &  \textbf{38.9} \\
hopper-r            & \textbf{31.3}$\pm$0.6  & 16.7 & \textbf{31.6}$\pm$0.6   & 17.9 & 3.7 & 7.9 & 7.9 & - & \textbf{38.1} \\
walker2d-r           & \textbf{10.4}$\pm$0.7  & 4.2  &    5.3$\pm$0.0          & \textbf{7.0}  & 1.3 & 5.1 & 5.4 & - &  \textbf{16.0}  \\
half-m-r & \textbf{67.2}$\pm$3.3  & \textbf{69.2} & 57.8$\pm$2.7   & 55.1 & 37.6 & \textbf{45.5} & 44.2 & 36.6 & 44.5 \\
hopper-m-r      & \textbf{104.4}$\pm$0.9 & 32.7 & \textbf{100.7}$\pm$1.3   & 89.5 & 16.6 & 88.7 & \textbf{94.7} & 82.7 & 81.8 \\
walker2d-m-r    & \textbf{75.3}$\pm$0.2  & 73.7 & \textbf{75.3}$\pm$1.7   & 56.0 & 20.3 & \textbf{81.8} & 73.8 & 66.6 & 40.8 \\
half-m        & 61.9$\pm$2.9  & \textbf{73.1} & \textbf{69.2}$\pm$2.8   & 54.2 & 43.2 & 47.0 & 47.4 & 42.6 & \textbf{60.7} \\
hopper-m             & \textbf{104.3}$\pm$1.3 & 38.3 & \textbf{100.0}$\pm$1.3  & 97.2 & 54.1 & 53.0 & 66.2 & 67.6 & \textbf{84.0} \\
walker2d-m           & \textbf{77.9}$\pm$1.6  & 41.2 & 77.4$\pm$0.9   & \textbf{81.9}  & 70.9 & 73.3 & \textbf{78.3}  & 74.0 & 72.8 \\
half-m-e & 74.1$\pm$1.4  & 70.3 & \textbf{96.4}$\pm$3.6   & \textbf{90.0}  & 44.0 & 75.6 & 86.7 & \textbf{86.8} & 80.4 \\
hopper-m-e      & \textbf{107.0}$\pm$1.3 & 60.6 & 106.5$\pm$0.4 & \textbf{111.1} & 53.9 & 105.6 & 91.5 & \textbf{107.6} & 105.6 \\
walker2d-m-e    & \textbf{107.9}$\pm$0.9 & 77.4 & \textbf{114.6}$\pm$0.7  & 103.3 & 90.1 & 107.9 & \textbf{109.6} & 108.1 & 107.5 \\
\midrule
Average score & \textbf{71.3} & 49.4 & \textbf{72.0} & 66.8 & 36.5 & 59.1 & 59.9 & - & \textbf{64.3} \\
\bottomrule
\end{tabular}
\end{table*}

TATU can also be incorporated with off-the-shelf model-free offline RL methods, where TATU serves as a role of offline data augmentation to improve the generalization ability of model-free offline RL algorithms. At this time, we need to train an additional rollout policy such that the data augmentation process is isolated from the policy optimization process, i.e., the dataset is augmented beforehand. To avoid potential OOD actions, we model the rollout policy using the conditional variational auto-encoder (CVAE) \cite{Kingma2013AutoEncodingVB}. We choose CVAE as it guarantees that the generated actions lie in the span of the dataset\footnote{One can also use other generative models like GAN \cite{Mirza2014ConditionalGA}, diffusion model \cite{Ho2020DenoisingDP}, etc.}. The CVAE is made up of an encoder $E_\xi(s,a)$ that produces a latent variable $z$ under the Gaussian distribution and a decoder $D_\nu(s,z)$ that maps $z$ to the desired space. The objective function for training CVAE is shown below.
\begin{equation}
    \label{eq:cvae}
    \begin{aligned}
    \mathcal{L}_{\rm CVAE} = \mathop{\mathbb{E}}\limits_{\substack{(s,a,r,s^\prime)\sim\mathcal{D}, z\sim{E}_{\xi}(s,a)}} [ & \left( a - {D}_{\nu}(s,z) \right)^2 \\
    & + \left . D_{\mathrm{KL}}\left( {E}_{\xi}(s,a) \| \mathcal{N}(0,\bf{I}) \right) \right],
    \end{aligned}
\end{equation}
\noindent where the encoder $E_\xi(s,a)$ and decoder $D_\nu(s,z)$ are parameterized by $\xi,\nu$, respectively. $D_{\rm KL}(p \| q)$ denotes the KL-divergence between two distributions $p,q$, and $\bf I$ is an identity matrix. When drawing an action from the rollout policy, we first sample a latent variable $z$ from the multivariate Gaussian distribution $\mathcal{N}(0, {\bf I})$ and process it along with the current state $s$ with the decoder $D_\nu(s,z)$ to get the resulting action.

We summarize the detailed pseudo code for TATU in Algorithm \ref{alg:algtatumodelbased}. Note that when drawing samples from $\mathcal{D}\cup\mathcal{D}_{\rm model}$, we sample a proportion of $\eta B$ real data from real dataset $\mathcal{D}$ and a proportion of $(1-\eta)B$ synthetic data from model buffer $\mathcal{D}$ given the batch size $B$ and the real data ratio $\eta\in[0,1]$. We then use the underlying algorithms (e.g., CQL) to optimize the policy. To accommodate the reproducibility, we include our source code in the supplementary material and will open-source the code upon acceptance.

\section{Experiments}
In this section, we empirically examine how much can TATU benefit existing offline RL methods. Throughout our experimental evaluation, we aim at answering the following questions: (1) How much performance gain can model-based offline RL methods acquire when combined with TATU? (2) Can TATU also benefit model-free offline RL algorithms?

To answer these questions, we first combine TATU with two popular model-based offline RL algorithms, MOPO \cite{Yu2020MOPOMO} and COMBO \cite{Yu2021COMBOCO} to examine whether TATU can boost their performance in Section \ref{sec:combinemodelbased}. We also integrate TATU with three off-the-shelf model-free offline RL methods including CQL \cite{Kumar2020ConservativeQF}, TD3\_BC \cite{fujimoto2021a}, and BCQ \cite{Fujimoto2018OffPolicyDR}, to see whether TATU can also benefit them in Section \ref{sec:combinemodelfree}. We conduct experiments on D4RL \cite{Fu2020D4RLDF} MuJoCo datasets for evaluation. In Section \ref{sec:parameterstudy}, We provide a detailed parameter study on some important hyperparameters in TATU, e.g., rollout horizon. We further compare TATU against other data selection methods in Section \ref{sec:compareothermethods}.

\subsection{Combination with Model-based Offline RL}
\label{sec:combinemodelbased}
Since TATU is designed intrinsically for reliable imagination generation using a learnt dynamics model, we incorporate it with two widely used offline model-based RL algorithms, MOPO \cite{Yu2020MOPOMO} and COMBO \cite{Yu2021COMBOCO}, giving rise to TATU+MOPO and TATU+COMBO. Since MOPO already penalizes the reward signal with uncertainty, we only add an additional penalty $\kappa$ to the termination state. TATU modifies the way of imagination generation and concretely rejects bad transition tuples in these methods.

To examine whether TATU can achieve performance improvement upon MOPO and COMBO, we conduct experiments on 12 D4RL \cite{Fu2020D4RLDF} MuJoCo datasets, which includes three tasks: \textit{halfcheetah}, \textit{hopper}, \textit{walker2d}. We adopt four types of datasets for each task: \textit{random}, \textit{medium}, \textit{medium-replay}, and \textit{medium-expert}, as they are typically utilized for performance evaluation in model-based offline RL. We compare TATU+MOPO, TATU+COMBO against their base algorithms. We also compare them against some common baselines in offline RL, such as behavior cloning (BC), IQL \cite{Kostrikov2022OfflineRL}, Decision Transformer (DT) \cite{Chen2021DecisionTR}. We take the results of IQL on medium-level datasets from its original paper and run with its official implementation\footnote{https://github.com/ikostrikov/implicit\_q\_learning} on random and expert datasets. The results of BC are acquired by our own implementation. For methods that were originally evaluated on ``-v0" datasets, we retrain them with the official implementations on ``-v2" datasets, and take the results of other baselines from their original papers. 

\begin{table*}
\caption{Normalized average score comparison of TATU+TD3\_BC, TATU+CQL and TATU+BCQ against their base algorithms and some recent baselines on the D4RL MuJoCo ``-v2" dataset. half = halfcheetah, r = random, m = medium, m-r = medium-replay, m-e = medium-expert, e = expert. Each algorithm is run for 1M gradient steps across 5 different random seeds and the final mean performance is reported. $\pm$ captures the standard deviation. We bold the top 3 score of the left part and the best score of the right part.}
\label{tab:resultsonmodelfree}
\centering
\begin{tabular}{ccccccc|ccc}
\toprule
Task Name                 & TATU+TD3\_BC             & TD3\_BC     & TATU+CQL                & CQL    &TATU+BCQ                 &BCQ  &  BC & IQL & DT \\
\midrule
half-r        & \textbf{12.1}$\pm$2.3  & 11.0        & \textbf{27.4}$\pm$2.6 & \textbf{17.5}   & 2.3$\pm$2.3             &2.2 & 2.2 & \textbf{13.1} & -   \\
hopper-r             & \textbf{31.6}$\pm$0.6  & 8.5         & \textbf{32.3}$\pm$0.7 & 7.9    & \textbf{10.3}$\pm$0.8   &7.8 & 3.7 & \textbf{7.9} & -  \\
walker2d-r           & \textbf{21.4}$\pm$0.0  & 1.6         & \textbf{23.0}$\pm$0.0 & \textbf{5.1}   & 3.4$\pm$0.4              &4.9 & 1.3 & \textbf{5.4} & -   \\
half-m-r & \textbf{45.9}$\pm$0.6  & 44.6                & \textbf{48.0}$\pm$0.7         & \textbf{45.5}  & 43.5$\pm$0.3    &42.2 & 37.6 & \textbf{44.2} & 36.6  \\
hopper-m-r      & 65.7$\pm$3.9  & 60.9        & \textbf{96.8}$\pm$2.6 & \textbf{88.7}  & \textbf{72.9}$\pm$0.4   &60.9 & 16.6 & \textbf{94.7} & 82.7  \\
walker2d-m-r    & \textbf{81.9}$\pm$2.7  & \textbf{81.8}        &\textbf{85.5}$\pm$1.2 & \textbf{81.8}   & 77.7$\pm$1.0   &57.0 & 20.3 & \textbf{73.8} & 66.6  \\
half-m        & \textbf{48.4}$\pm$2.7  & \textbf{48.3}          & 44.9$\pm$0.3                & 47.0  & \textbf{47.6}$\pm$0.2    &46.6 & 43.2 & \textbf{47.4} & 42.6  \\
hopper-m             & \textbf{62.0}$\pm$1.0  & 59.3        & \textbf{68.9}$\pm$0.9 & 53.0  & \textbf{71.0}$\pm$0.3   &59.4 & 54.1 & 66.2 & \textbf{67.6} \\
walker2d-m           & \textbf{84.3}$\pm$0.2  & \textbf{83.7}        &  65.7$\pm$0.5  & 73.3 & \textbf{80.5}$\pm$0.4  &71.8 & 70.9 & \textbf{78.3} & 74.0 \\
half-m-e & \textbf{97.1}$\pm$3.2  & 90.7        & 78.9$\pm$0.5 & 75.6   & \textbf{96.1}$\pm$0.2   &\textbf{95.4}  & 44.0 & 86.7 & \textbf{86.8} \\
hopper-m-e      & \textbf{113.0}$\pm$1.7 & 98.0        & \textbf{111.5}$\pm$1.0 & 105.6 & \textbf{108.2}$\pm$0.3 &106.9 & 53.9 & 91.5 & \textbf{107.6} \\
walker2d-m-e    & \textbf{110.9}$\pm$0.5 & 110.1       & \textbf{110.2}$\pm$0.1  & 107.9 & \textbf{111.7}$\pm$0.3 &107.7 & 90.1 & \textbf{109.6} & 108.1 \\
half-e        & \textbf{97.4}$\pm$0.4  & \textbf{96.7}        & 90.8$\pm$3.4   &  \textbf{96.3} & \textbf{96.3}$\pm$1.4  &89.9  & 91.8 & \textbf{95.0} & -  \\
hopper-e             & \textbf{111.8}$\pm$1.1 & \textbf{107.8}      & 106.8$\pm$0.9  & 96.5 & 103.9$\pm$0.6         &\textbf{109.0} & 107.7 & \textbf{109.4} & - \\
walker2d-e           & \textbf{110.0}$\pm$0.1         & \textbf{110.2}& 108.3$\pm$0.1 & 108.5 & \textbf{109.7}$\pm$0.3 &106.3 & 106.7 &  \textbf{109.9} & - \\
\midrule
Average score & \textbf{72.9} & 67.5 & \textbf{73.3} & 67.3 & \textbf{69.0} & 64.5 & 49.6 & \textbf{68.9} & - \\
\bottomrule
\end{tabular}
\end{table*}

All of the algorithms are run for 1M gradient steps. Table \ref{tab:resultsonmodelbased} reports the normalized score over the final 10 evaluations for each task and their average performance over 5 different random seeds. It can be found that TATU markedly boosts the performance of both MOPO and COMBO on most of the datasets, often by a large and significant margin. For example, TATU+MOPO achieves a mean score of \textbf{104.4} on hopper-medium-replay-v2 and \textbf{77.9} on walker2d-medium-v2, while MOPO only attains 32.7 and 41.2 on them, respectively. It is worth noting that MOPO and COMBO gain \textbf{44\%} and \textbf{7.8\%} improvement in average score, respectively, with the aid of TATU. We also observe a competitive or better performance of TATU+MOPO and TATU+COMBO on the evaluated datasets against baseline methods like IQL, MOReL, etc. TATU+COMBO has the best average score (\textbf{72.0}) across all of the datasets, TATU+MOPO achieves an average score of \textbf{71.3}. Whereas, the best baseline only achieves 66.8. The empirical evaluations indicate that TATU can significantly benefit the existing model-based offline RL methods.

\subsection{Combination with Model-free Offline RL}
\label{sec:combinemodelfree}
We now demonstrate that TATU can also benefit model-free offline RL algorithms. We first train the dynamics model and an additional CVAE rollout policy. During data generation, we truncate the synthetic trajectory if the accumulated uncertainty along it is large. Importantly, we leverage the rollout policy to produce actions in the imagined trajectory, thus isolating the data augmentation process from the policy optimization process. 

To illustrate the generality and effectiveness of TATU, we combine it with three popular model-free offline RL algorithms, CQL \cite{Kumar2020ConservativeQF}, BCQ \cite{Fujimoto2018OffPolicyDR}, and TD3\_BC \cite{fujimoto2021a}, yielding TATU+CQL, TATU+BCQ, and TATU+TD3\_BC algorithms. We extensively compare them with their base algorithms and three model-free offline RL baselines (BC, IQL, DT) on 15 D4RL datasets, with additional \textit{expert} datasets of three tasks compared to the experimental evaluation in Section \ref{sec:combinemodelbased}.

We summarize the experimental results in Table \ref{tab:resultsonmodelfree}. We find that TATU significantly improves the performance of the base algorithms on many datasets, often outperforming them by a remarkable margin, especially on many poor-quality datasets (e.g., random). As an example, TATU+TD3\_BC has a mean normalized score of \textbf{31.6} on hopper-random-v2, while TD3\_BC itself only achieves 8.5. The three base methods, TD3\_BC, CQL, BCQ, gains \textbf{8.0\%}, \textbf{8.9\%} and \textbf{7.0\%} performance improvement in average score, respectively. With the conservative and reliable transition generation by our novel synthetic trajectory truncation method, model-free offline RL methods can benefit from better generalization capability, resulting in better performance. Based on these experiments, we conclude that TATU is general and can widely benefit both model-based and model-free offline RL algorithms.

Note that due to space limit, we omit the standard deviation for baseline methods, and the full comparison results of Table \ref{tab:resultsonmodelbased} and Table \ref{tab:resultsonmodelfree} can be found in Appendix \ref{sec:fullcomparison}.

\subsection{Parameter Study}
\label{sec:parameterstudy}
There are three most critical hyperparameters in TATU, the rollout horizon $h$, the threshold parameter $\alpha$, and the real data ratio $\eta\in[0,1]$ in a sampled batch from $\mathcal{D}\cup\mathcal{D}_{\rm model}$.

\begin{table}
\caption{Comparison of TATU+MOPO and TATU+TD3\_BC with different horizon lengths $h$ on hopper-medium-replay-v2. The results are run for 1M gradient steps and averaged over the final 10 evaluations and 5 random seeds. Mean performance along with the standard deviation are reported.}
\label{tab:horizon}
\centering
\begin{tabular}{ccc}
\toprule
horizon $h$    & TATU+MOPO                  & TATU+TD3\_BC        \\ 
\midrule
1               & 26.7$\pm$4.4             & 57.9$\pm$1.9          \\
3               & 100.4$\pm$1.3             & 70.0$\pm$4.9             \\
5               & 104.4$\pm$0.9            & 65.7$\pm$3.9          \\
7               & 102.0$\pm$0.7            & 71.1$\pm$12.4          \\
10              & 103.7$\pm$1.6            & 71.0$\pm$16.8       \\
\bottomrule
\end{tabular}
\end{table}

\noindent\textbf{Rollout horizon $h$.} This parameter measures how far that we allow the imagined trajectory to branch. With a larger horizon length for the dynamics model, more diverse data can be included in the model buffer, which, however, also increases the risk of involving bad transitions. This issue can be alleviated by TATU as it adaptively truncates the imagined trajectory. A large horizon length $h$ can therefore be adopted. To see the influence of $h$, we conduct experiments on hopper-medium-replay-v2 with TATU+MOPO and TATU+TD3\_BC. We keep other parameters fixed and sweep $h$ across $\{1,3,5,7,10\}$. Figure \ref{fig:main_horizonhoppermediumreplay} shows the corresponding learning curves. Results in Table \ref{tab:horizon} indicates that TATU enjoys better performance with a larger horizon and fails with horizon $1$. This may be because the diversity of the generated samples is not enough with a too small horizon length. With additional experiments, we find the optimal horizon $h$ is influenced by the quality of the dataset (please see Section \ref{app:implementation} for more details). The fact is that for expert datasets (with narrow span), generated samples are more likely to be OOD at a certain horizon $h$. On medium-quality datasets with a larger horizon, algorithms can benefit from conservative and diversified imaginations of TATU to improve performance. In our main experiments, we set the rollout horizon $h=5$ by default.

\begin{figure}
    \centering
    \includegraphics[width=0.45\linewidth]{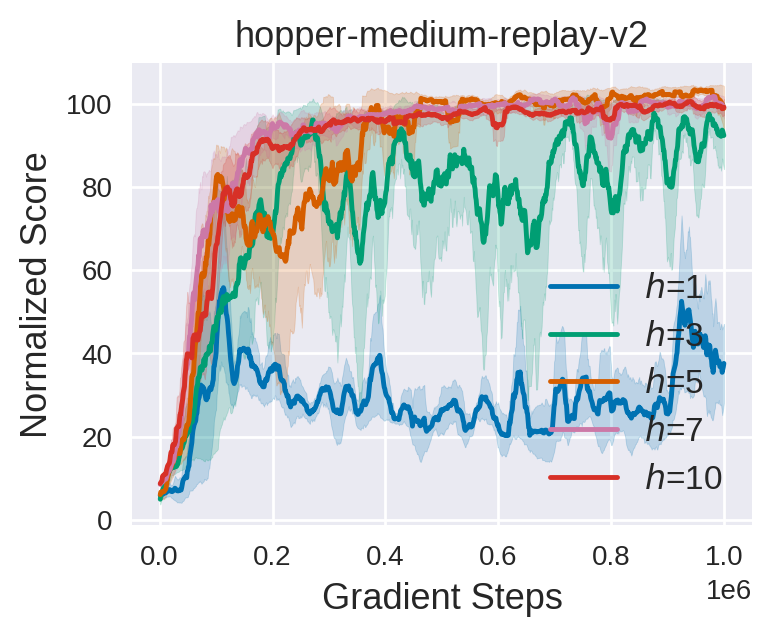}
    \includegraphics[width=0.45\linewidth]{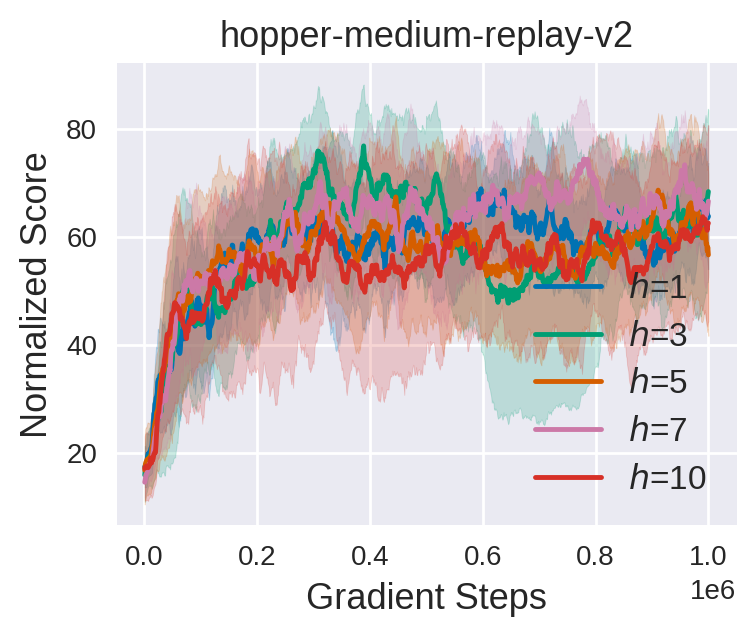}
    \caption{Performance of TATU+MOPO (left) and TATU+TD3\_BC (right) with different rollout horizons on hopper-medium-replay-v2. The results are averaged over 5 different random seeds, and we report the mean performance and the standard deviation.}
    \label{fig:main_horizonhoppermediumreplay}
\end{figure}

\noindent\textbf{Threshold coefficient $\alpha$.} The threshold parameter $\alpha$ is probably the most critical parameter for TATU, which controls the strength that we reject the generated imaginations (larger $\alpha$ will reject more transitions). This parameter is highly related to the quality of the dataset, e.g., a large $\alpha$ is better for expert-level datasets while $\alpha$ can be small for dataset that is collected by a random policy. Generally, we require $\alpha\ge 1$. To show the effects of this parameter, we run TATU+MOPO and TATU+TD3\_BC on hopper-medium-replay-v2 with $\alpha\in\{1.0,2.0,3.0,4.0,5.0\}$. We report the numerical comparison results in Table \ref{tab:coefficientalpha} and the learning curves in Figure \ref{fig:main_pesscoefhoppermediumreplay}. We see that larger $\alpha$ degrades the agent's performance of MOPO. This is harmful as the diversity in the imaginations is decreased and real data ratio $\eta$ adopted in MOPO is small, few samples are admitted. Too large $\alpha$ also causes more unstable in learning as shown in Figure \ref{fig:main_pesscoefhoppermediumreplay}. For TD3\_BC, however, a larger $\alpha$ is better as small $\alpha$ enables a large threshold $\epsilon$, i.e., bad transitions may be included and are more destructive for TD3\_BC (as it involves an imitation learning term) than for MOPO. We notice that we can luckily find a trade-off of $\alpha$ for different algorithms.

\begin{table}
\caption{Comparison of TATU+MOPO and TATU+TD3\_BC with different threshold coefficient $\alpha$ on hopper-medium-replay-v2. Each algorithm is run for 1M steps with 5 random seeds. We report the mean performance over the final 10 evaluations, in conjunction with the standard deviation.}
\label{tab:coefficientalpha}
\centering
\begin{tabular}{ccc}
\toprule
Coefficient $\alpha$   & TATU+MOPO    & TATU+TD3\_BC        \\ 
\midrule
1.0  &  101.4$\pm$0.8   &    32.1$\pm$11.1       \\
2.0  &  104.4$\pm$0.9    &    50.2$\pm$4.4          \\
3.0  &  98.7$\pm$3.3   &    52.2$\pm$10.1      \\
4.0  &  96.4$\pm$2.7   &     61.9$\pm$10.7      \\
5.0  &  52.8$\pm$31.7   &    58.6$\pm$13.4    \\
\bottomrule
\end{tabular}
\end{table}

\begin{figure}
    \centering
    \includegraphics[width=0.45\linewidth]{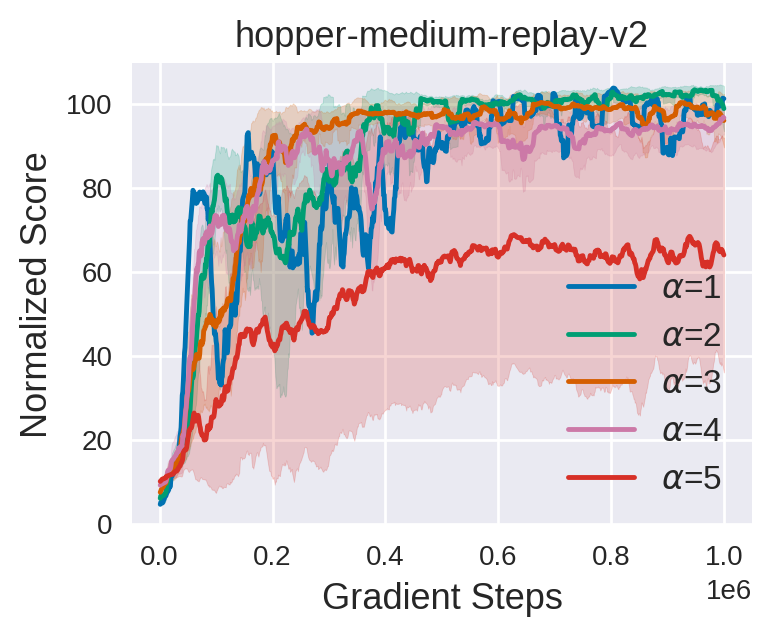}
    \includegraphics[width=0.45\linewidth]{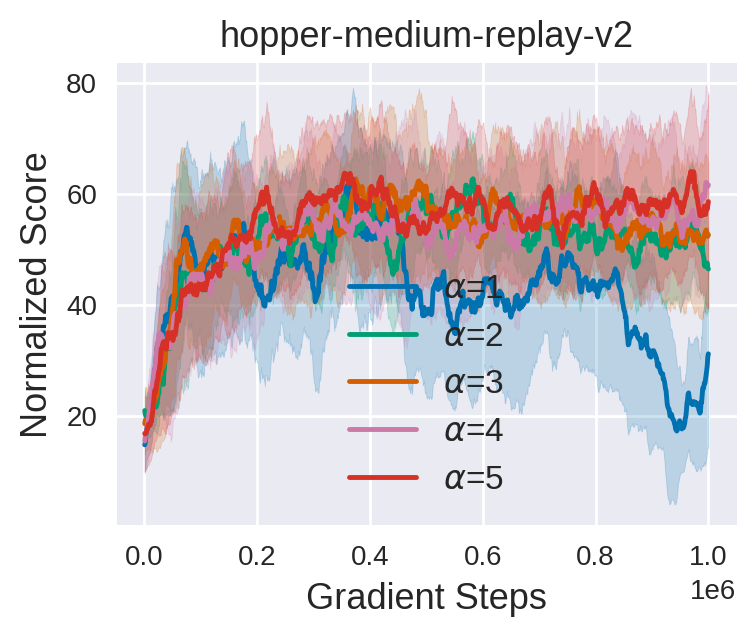}
    \caption{Normalized score comparison of TATU+MOPO (left) and TATU+TD3\_BC (right) on hopper-medium-replay-v2 under different threshold coefficients $\alpha$. The results are averaged over 5 different random seeds and the shaded region is the standard deviation.}
    \label{fig:main_pesscoefhoppermediumreplay}
\end{figure}

\noindent\textbf{Real data ratio $\eta$.} The real data ratio controls the proportion of real data in a sampled mini-batch, i.e., it determines how many synthetic samples are used for offline policy learning. $\eta$ also strongly depends on the specific dataset and setting. Model-based offline RL method usually uses a small $\eta$, and the $\eta$ for model-free offline RL algorithm relies heavily on the quality of the dataset, e.g., a small $\eta$ is preferred for datasets with poor quality, and a larger $\eta$ is better for expert datasets. To see how $\eta$ affects the performance of model-based and model-free offline RL algorithms with TATU, we run TATU+MOPO and TATU+TD3\_BC with different real data ratio $\eta\in\{0.05, 0.25, 0.5, 0.7, 0.9\}$ on hopper-medium-replay-v2. We summarize the results in Table \ref{tab:realdataratio}, where we find that TATU+MOPO achieves very good performance under a wide range of $\eta$, even a small $\eta=0.05$ thanks to the conservative trajectory truncation by TATU. As the dataset is of medium quality, TATU+TD3\_BC attains higher performance with larger $\eta$, as expected. The learning curves of TATU+MOPO and TATU+TD3\_BC under different real data ratios are shown in Figure \ref{fig:main_realdataratiohoppermediumreplay}, which are consistent with our analysis above. 

\begin{table}
\caption{Comparison of TATU+MOPO and TATU+TD3\_BC under different real data ratio $\eta$ on hopper-medium-replay-v2. The results are averaged over the final 10 evaluations and 5 random seeds. We report the mean performance and the standard deviation.}
\label{tab:realdataratio}
\centering
\begin{tabular}{ccc}
\toprule
Real data ratio $\eta$   & TATU+MOPO                  & TATU+TD3\_BC        \\ 
\midrule
0.05               &  104.4$\pm$0.9            & 9.1$\pm$4.4          \\
0.25               &  101.5$\pm$3.8            & 39.7$\pm$15.6         \\
0.5                &  101.1$\pm$2.7            & 65.4$\pm$1.2         \\
0.7                &  92.9$\pm$2.9            & 65.7$\pm$3.9         \\
0.9                &  85.1$\pm$12.4            & 68.8$\pm$17.9        \\
\bottomrule
\end{tabular}
\end{table}
\vspace{-0.3cm}

\begin{figure}
    \centering
    \includegraphics[width=0.45\linewidth]{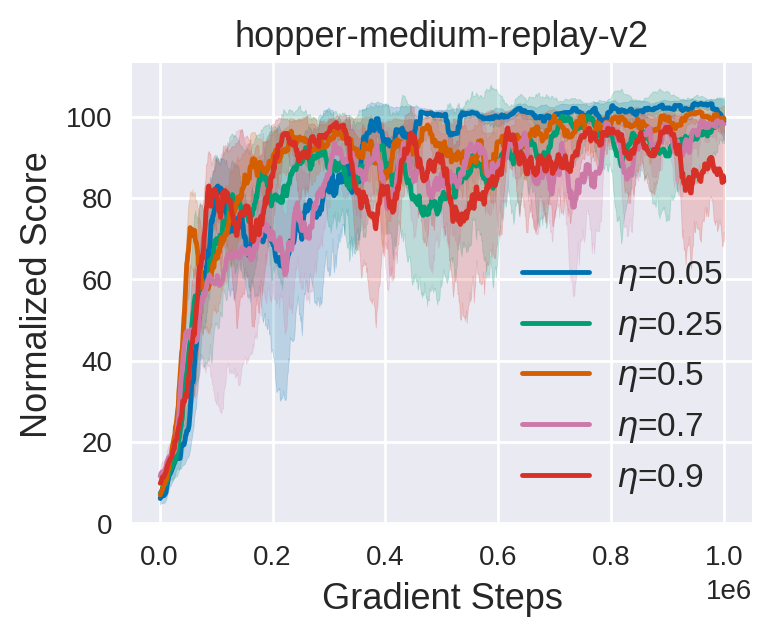}
    \includegraphics[width=0.45\linewidth]{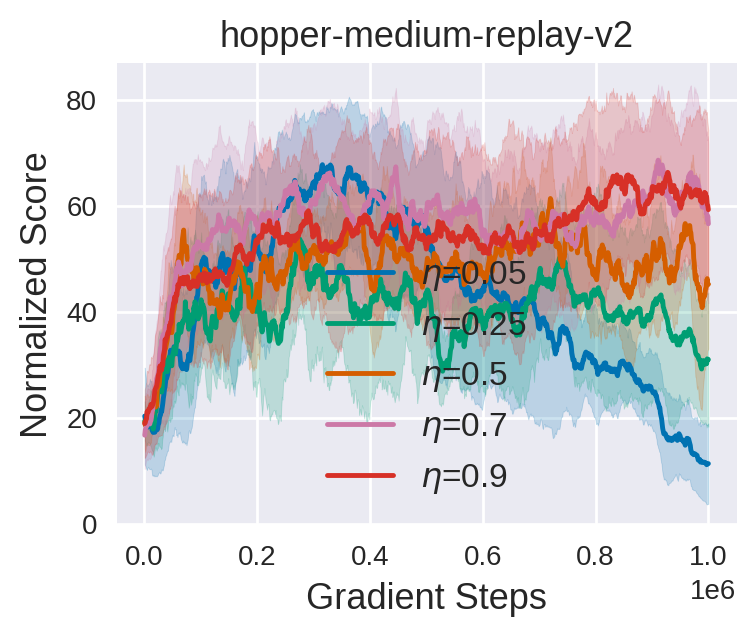}
    \caption{Performance of TATU+MOPO (left) and TATU+TD3\_BC (right) on hopper-medium-replay-v2 using different real data ratios. All methods are run for 5 different random seeds, and the shaded region captures the standard deviation.}
    \label{fig:main_realdataratiohoppermediumreplay}
\end{figure}

\subsection{Comparison with Other Relevant Methods}
\label{sec:compareothermethods}
In order to further show the advantages of our proposed TATU method, we compare it against other data selection methods, including CABI \cite{lyu2022double} and MOPP \cite{Zhan2021ModelBasedOP}. MOPP selects transitions by performing planning and then sorting the trajectory such that each sample in the left trajectory meets the uncertainty constraint. CABI trains bidirectional dynamics models and only admits transitions that the forward and backward model agree on. TATU ensures reliable data generation via adaptively truncating the imagined trajectory. We combine TATU and CABI with TD3\_BC and conduct experiments on 6 \textit{random}, \textit{medium-expert} datasets from D4RL MuJoCo datasets. We follow the guidance in the CABI paper and implement it on our own. We take the results of MOPP from its original paper directly. We run each algorithm for 1M gradient steps over 5 different random seeds. The results are presented in Table \ref{tab:otherdataselectionmethods}, where we report the average final performance. We find that TATU+TD3\_BC outperforms MOPP and CABI on most of the datasets, achieving the best performance on 4 out of 6 datasets. These we believe can well illustrate the superiority of TATU.

\vspace{-0.3cm}
\begin{table}
\caption{Comparison of TATU against CABI (with TD3\_BC as the base algorithm) and MOPP on six datasets from D4RL.}
\label{tab:otherdataselectionmethods}
\setlength\tabcolsep{3pt}
\centering
\begin{tabular}{@{}cccc@{}}
\toprule
Task Name                 & TATU+TD3\_BC      & CABI+TD3\_BC        &  MOPP \\ 
\midrule
half-r              & 12.1$\pm$2.3       &   \textbf{14.3}$\pm$0.4    &9.4$\pm$2.6 \\
hopper-r              & \textbf{31.6}$\pm$0.6    & 15.7$\pm$11.1    &13.7$\pm$2.5  \\
walker-r              & \textbf{21.4}$\pm$0.0    &  6.0$\pm$0.3      &6.3$\pm$0.1 \\
half-m-e          &  97.1$\pm$3.2     &  94.8$\pm$1.0      &\textbf{106.2}$\pm$5.1\\
hopper-m-e       & \textbf{113.0}$\pm$1.7      &    111.4$\pm$0.3    &95.4$\pm$28.0\\
walker2d-m-e        & \textbf{110.9}$\pm$0.5    &   110.5$\pm$0.6   &92.9$\pm$14.1\\
\bottomrule
\end{tabular}
\vspace{-0.6cm}
\end{table}

\section{Conclusion}
In this paper, we propose trajectory truncation with uncertainty (TATU) to facilitate both model-based and model-free offline RL algorithms. We adaptively truncate the imagined trajectory if the accumulated uncertainty of this trajectory is too large, which ensures the reliability of the synthetic samples. We theoretically demonstrate the advantages of our proposed truncation method. Empirical results on the D4RL benchmark show that TATU markedly improves the performance of model-based offline RL methods (e.g., MOPO) and model-free offline RL methods (e.g., CQL). These altogether illustrate the generality and effectiveness of TATU.

\ack This work was supported in part by the Science and Technology Innovation
2030-Key Project under Grant 2021ZD0201404.


\bibliography{ecai}

\begin{thebibliography}{10}

\bibitem{An2021UncertaintyBasedOR}
Gaon An, Seungyong Moon, Jang-Hyun Kim, and Hyun~Oh Song, `Uncertainty-based
  offline reinforcement learning with diversified q-ensemble', in {\em Neural
  Information Processing Systems}, (2021).

\bibitem{Argenson2020ModelBasedOP}
Arthur Argenson and Gabriel Dulac-Arnold, `Model-based offline planning', in
  {\em International Conference on Learning Representations}, (2020).

\bibitem{Bai2022PessimisticBF}
Chenjia Bai, Lingxiao Wang, Zhuoran Yang, Zhihong Deng, Animesh Garg, Peng Liu,
  and Zhaoran Wang, `Pessimistic bootstrapping for uncertainty-driven offline
  reinforcement learning', in {\em International Conference on Learning
  Representation}, (2022).

\bibitem{Brandfonbrener2021OfflineRW}
David Brandfonbrener, William~F. Whitney, Rajesh Ranganath, and Joan Bruna,
  `Offline rl without off-policy evaluation', in {\em Neural Information
  Processing Systems}, (2021).

\bibitem{Cang2021BehavioralPA}
Catherine Cang, Aravind Rajeswaran, P.~Abbeel, and Michael Laskin, `Behavioral
  priors and dynamics models: Improving performance and domain transfer in
  offline rl', {\em ArXiv}, {\bf abs/2106.09119}, (2021).

\bibitem{Chen2021DecisionTR}
Lili Chen, Kevin Lu, Aravind Rajeswaran, Kimin Lee, Aditya Grover, Michael
  Laskin, P.~Abbeel, A.~Srinivas, and Igor Mordatch, `Decision transformer:
  Reinforcement learning via sequence modeling', in {\em Neural Information
  Processing Systems}, (2021).

\bibitem{chen2022lapo}
Xi~Chen, Ali Ghadirzadeh, Tianhe Yu, Jianhao Wang, Yuan Gao, Wenzhe Li, Liang
  Bin, Chelsea Finn, and Chongjie Zhang, `{LAPO}: Latent-variable
  advantage-weighted policy optimization for offline reinforcement learning',
  in {\em Neural Information Processing Systems}, (2022).

\bibitem{Cheng2022AdversariallyTA}
Ching-An Cheng, Tengyang Xie, Nan Jiang, and Alekh Agarwal, `Adversarially
  trained actor critic for offline reinforcement learning', in {\em
  International Conference on Machine Learning}, (2022).

\bibitem{Fu2020D4RLDF}
Justin Fu, Aviral Kumar, Ofir Nachum, G.~Tucker, and Sergey Levine, `D4rl:
  Datasets for deep data-driven reinforcement learning', {\em ArXiv}, {\bf
  abs/2004.07219}, (2020).

\bibitem{fujimoto2021a}
Scott Fujimoto and Shixiang Gu, `A minimalist approach to offline reinforcement
  learning', in {\em Neural Information Processing Systems}, (2021).

\bibitem{Fujimoto2018OffPolicyDR}
Scott Fujimoto, David Meger, and Doina Precup, `Off-policy deep reinforcement
  learning without exploration', in {\em International Conference on Machine
  Learning}, (2018).

\bibitem{Gu2016DeepRL}
Shixiang~Shane Gu, Ethan Holly, Timothy~P. Lillicrap, and Sergey Levine, `Deep
  reinforcement learning for robotic manipulation with asynchronous off-policy
  updates', {\em International Conference on Robotics and Automation}, (2016).

\bibitem{Guo2022ModelBasedOR}
Kaiyang Guo, Yunfeng Shao, and Yanhui Geng, `Model-based offline reinforcement
  learning with pessimism-modulated dynamics belief', in {\em Neural
  Information Processing Systems}, (2022).

\bibitem{Ho2020DenoisingDP}
Jonathan Ho, Ajay Jain, and P.~Abbeel, `Denoising diffusion probabilistic
  models', in {\em Neural Information Processing Systems}, (2020).

\bibitem{Janner2019WhenTT}
Michael Janner, Justin Fu, Marvin Zhang, and Sergey Levine, `When to trust your
  model: Model-based policy optimization', in {\em Neural Information
  Processing Systems}, (2019).

\bibitem{Janner2021ReinforcementLA}
Michael Janner, Qiyang Li, and Sergey Levine, `Reinforcement learning as one
  big sequence modeling problem', in {\em Neural Information Processing
  Systems}, (2021).

\bibitem{Kidambi2020MOReLM}
Rahul Kidambi, Aravind Rajeswaran, Praneeth Netrapalli, and Thorsten Joachims,
  `Morel : Model-based offline reinforcement learning', in {\em Neural
  Information Processing Systems}, (2020).

\bibitem{Kingma2013AutoEncodingVB}
Diederik~P. Kingma and Max Welling, `Auto-encoding variational bayes', {\em
  ArXiv}, {\bf abs/1312.6114}, (2013).

\bibitem{Kostrikov2022OfflineRL}
Ilya Kostrikov, Ashvin Nair, and Sergey Levine, `Offline reinforcement learning
  with implicit q-learning', in {\em International Conference on Learning
  Representation}, (2021).

\bibitem{Kostrikov2021OfflineRL}
Ilya Kostrikov, Jonathan Tompson, Rob Fergus, and Ofir Nachum, `Offline
  reinforcement learning with fisher divergence critic regularization', in {\em
  International Conference on Machine Learning}, (2021).

\bibitem{Kumar2019StabilizingOQ}
Aviral Kumar, Justin Fu, G.~Tucker, and Sergey Levine, `Stabilizing off-policy
  q-learning via bootstrapping error reduction', in {\em Neural Information
  Processing Systems}, (2019).

\bibitem{Kumar2020ConservativeQF}
Aviral Kumar, Aurick Zhou, G.~Tucker, and Sergey Levine, `Conservative
  q-learning for offline reinforcement learning', in {\em Neural Information
  Processing Systems}, (2020).

\bibitem{Lange2012BatchRL}
Sascha Lange, Thomas Gabel, and Martin~A. Riedmiller, `Batch reinforcement
  learning', in {\em Reinforcement Learning}, (2012).

\bibitem{Lee2021RepresentationBO}
Byung-Jun Lee, Jongmin Lee, and Kee-Eung Kim, `Representation balancing offline
  model-based reinforcement learning', in {\em International Conference on
  Learning Representations}, (2021).

\bibitem{Liu2019OffPolicyPG}
Yao Liu, Adith Swaminathan, Alekh Agarwal, and Emma Brunskill, `Off-policy
  policy gradient with state distribution correction', {\em ArXiv}, {\bf
  abs/1904.08473}, (2019).

\bibitem{Lu2021RevisitingDC}
Cong Lu, Philip~J. Ball, Jack Parker-Holder, Michael~A. Osborne, and Stephen~J.
  Roberts, `Revisiting design choices in model-based offline reinforcement
  learning', in {\em International Conference on Learning Representations},
  (2021).

\bibitem{lyu2022double}
Jiafei Lyu, Xiu Li, and Zongqing Lu, `Double check your state before trusting
  it: Confidence-aware bidirectional offline model-based imagination', in {\em
  Neural Information Processing Systems}, (2022).

\bibitem{lyu2022mildly}
Jiafei Lyu, Xiaoteng Ma, Xiu Li, and Zongqing Lu, `Mildly conservative
  q-learning for offline reinforcement learning', in {\em Neural Information
  Processing Systems}, (2022).

\bibitem{Ma2021ConservativeOD}
Yecheng~Jason Ma, Dinesh Jayaraman, and Osbert Bastani, `Conservative offline
  distributional reinforcement learning', in {\em Neural Information Processing
  Systems}, (2021).

\bibitem{Matsushima2020DeploymentEfficientRL}
Tatsuya Matsushima, Hiroki Furuta, Yutaka Matsuo, Ofir Nachum, and
  Shixiang~Shane Gu, `Deployment-efficient reinforcement learning via
  model-based offline optimization', in {\em International Conference on
  Learning Representations}, (2020).

\bibitem{Mirza2014ConditionalGA}
Mehdi Mirza and Simon Osindero, `Conditional generative adversarial nets', in
  {\em Neural Information Processing Systems}, (2014).

\bibitem{Nachum2019AlgaeDICEPG}
Ofir Nachum, Bo~Dai, Ilya Kostrikov, Yinlam Chow, Lihong Li, and Dale
  Schuurmans, `Algaedice: Policy gradient from arbitrary experience', {\em
  ArXiv}, {\bf abs/1912.02074}, (2019).

\bibitem{Pan2020TrustTM}
Feiyang Pan, Jia He, Dandan Tu, and Qing He, `Trust the model when it is
  confident: Masked model-based actor-critic', in {\em Neural Information
  Processing Systems}, (2020).

\bibitem{Riachi2021ChallengesFR}
Elsa Riachi, Muhammad Mamdani, Michael Fralick, and Frank Rudzicz, `Challenges
  for reinforcement learning in healthcare', {\em ArXiv}, {\bf abs/2103.05612},
  (2021).

\bibitem{Rigter2022RAMBORLRA}
Marc Rigter, Bruno Lacerda, and Nick Hawes, `Rambo-rl: Robust adversarial
  model-based offline reinforcement learning', in {\em Neural Information
  Processing Systems}, (2022).

\bibitem{Sallab2017DeepRL}
Ahmad~El Sallab, Mohammed Abdou, Etienne Perot, and Senthil~Kumar Yogamani,
  `Deep reinforcement learning framework for autonomous driving', {\em ArXiv},
  {\bf abs/1704.02532}, (2017).

\bibitem{Swazinna2020OvercomingMB}
Phillip Swazinna, Steffen Udluft, and Thomas~A. Runkler, `Overcoming model bias
  for robust offline deep reinforcement learning', {\em Engineering
  Applications of Artificial Intelligence}, {\bf 104}, (2020).

\bibitem{Wang2021OfflineRL}
Jianhao Wang, Wenzhe Li, Haozhe Jiang, Guangxiang Zhu, Siyuan Li, and Chongjie
  Zhang, `Offline reinforcement learning with reverse model-based imagination',
  in {\em Neural Information Processing Systems}, (2021).

\bibitem{Wu2019BehaviorRO}
Yifan Wu, G.~Tucker, and Ofir Nachum, `Behavior regularized offline
  reinforcement learning', {\em ArXiv}, {\bf abs/1911.11361}, (2019).

\bibitem{Wu2021UncertaintyWA}
Yue Wu, Shuangfei Zhai, Nitish Srivastava, Joshua~M. Susskind, Jian Zhang,
  Ruslan Salakhutdinov, and Hanlin Goh, `Uncertainty weighted actor-critic for
  offline reinforcement learning', in {\em International Conference on Machine
  Learning}, (2021).

\bibitem{Yang2022AUF}
Shentao Yang, Shujian Zhang, Yihao Feng, and Mi~Zhou, `A unified framework for
  alternating offline model training and policy learning', in {\em Neural
  Information Processing Systems}, (2022).

\bibitem{Yang2022ParetoPP}
Yijun Yang, Jing~Ping Jiang, Tianyi Zhou, Jie Ma, and Yuhui Shi, `Pareto policy
  pool for model-based offline reinforcement learning', in {\em International
  Conference on Learning Representations}, (2022).

\bibitem{Yu2021COMBOCO}
Tianhe Yu, Aviral Kumar, Rafael Rafailov, Aravind Rajeswaran, Sergey Levine,
  and Chelsea Finn, `Combo: Conservative offline model-based policy
  optimization', in {\em Neural Information Processing Systems}, (2021).

\bibitem{Yu2020MOPOMO}
Tianhe Yu, Garrett Thomas, Lantao Yu, Stefano Ermon, James~Y. Zou, Sergey
  Levine, Chelsea Finn, and Tengyu Ma, `Mopo: Model-based offline policy
  optimization', in {\em Neural Information Processing Systems}, (2020).

\bibitem{Zhan2021ModelBasedOP}
Xianyuan Zhan, Xiangyu Zhu, and Haoran Xu, `Model-based offline planning with
  trajectory pruning', in {\em IJCAI-ECAI 22}, (2022).

\bibitem{Zhou2020PLASLA}
Wenxuan Zhou, Sujay Bajracharya, and David Held, `Plas: Latent action space for
  offline reinforcement learning', in {\em Conference on Robot Learning},
  (2020).

\end{thebibliography}

\clearpage

\clearpage
\appendix

\section{Missing Proofs}
\label{sec:missingproofs}

\subsection{Proof of Theorem \ref{theo:performancebound}}
In order to show Theorem \ref{theo:performancebound}, we first denote the pessimistic MDP $\mathcal{M}_p$ below:
\begin{definition}[Pessimistic MDP]
    \label{def:pessimisticmdp}
    The pessimistic MDP is defined by the tuple $\mathcal{M}_p = \langle\mathcal{S},\mathcal{A}, r_p, P_p, \rho_0,\gamma\rangle$, where $r_p (s,a)$ follows the Equation (\ref{eq:pmdpreward}) and $P_p(\cdot|s,a)$ is given by:
    \begin{equation}
        \label{eq:pessimisticmdptransition}
        P_p(\cdot|s_t,a_t) = \begin{cases}
            0, \qquad \qquad {\rm if}\, T_t^\epsilon(s_t,a_t) = 1, \\
            P(\cdot|s_t,a_t),\qquad {\rm otherwise},
        \end{cases}
    \end{equation}
    where $P(\cdot|s,a)$ is the transition probability of the true environmental dynamics.
\end{definition}

Then we introduce the following Lemma.
\begin{lemma}
\label{lemma:pessimisticperformance}
Set $\bar{r} = r_{\rm max}+\lambda u_{\rm max}+\kappa$. Given the pessimistic MDP $\mathcal{M}_p$ and the $\epsilon$-Pessimistic MDP $\hat{\mathcal{M}}_p$, the return difference of any policy $\pi$ on the two MDPs is bounded:
\begin{equation}
    \label{eq:pessimisticperformance}
    \begin{aligned}
    \left|J_{\rho_0} (\pi,\mathcal{M}_p) - J_{\hat{\rho}_0}(\pi,\hat{\mathcal{M}}_p) \right| &\le \dfrac{2\bar{r}}{1-\gamma} D_{\rm TV}(\rho_0, \hat{\rho}_0) \\
    &\quad + \bar{r} \cdot \epsilon.
    \end{aligned}
\end{equation}
\end{lemma}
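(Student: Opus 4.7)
My plan is to separate the two sources of discrepancy---initial-distribution mismatch and dynamics mismatch---via a triangle-inequality split. I would introduce an intermediate MDP $\mathcal{M}_p^\dagger$ that is identical to $\mathcal{M}_p$ in reward and transition kernel but starts from $\hat{\rho}_0$ rather than $\rho_0$. Then
\[
\bigl| J_{\rho_0}(\pi,\mathcal{M}_p) - J_{\hat{\rho}_0}(\pi,\hat{\mathcal{M}}_p) \bigr| \le \bigl| J_{\rho_0}(\pi,\mathcal{M}_p) - J_{\hat{\rho}_0}(\pi,\mathcal{M}_p^\dagger) \bigr| + \bigl| J_{\hat{\rho}_0}(\pi,\mathcal{M}_p^\dagger) - J_{\hat{\rho}_0}(\pi,\hat{\mathcal{M}}_p) \bigr|,
\]
and I would show that the first summand is at most $\tfrac{2\bar{r}}{1-\gamma}D_{\rm TV}(\rho_0,\hat{\rho}_0)$ while the second is at most $\bar{r}\cdot\epsilon$.

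The first summand is standard. Because $|r_p(s,a)| \le \bar{r}$ almost surely, the value function $V^\pi_{\mathcal{M}_p}$ has sup-norm bounded by $\bar{r}/(1-\gamma)$. Writing the first summand as $\mathbb{E}_{s\sim\rho_0}[V^\pi_{\mathcal{M}_p}(s)] - \mathbb{E}_{s\sim\hat{\rho}_0}[V^\pi_{\mathcal{M}_p}(s)]$ and invoking the dual characterization of total variation immediately yields the desired term.

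The second summand is where the truncation definition pays off. Apply the telescoping value-difference identity to $\mathcal{M}_p^\dagger$ versus $\hat{\mathcal{M}}_p$ (common initial distribution, common reward $r_p$, transitions $P_p$ vs.\ $\hat{P}_p$) to reduce the gap to a discounted expectation, taken over $\mathcal{M}_p^\dagger$-rollouts, of $D_{\rm TV}(P_p(\cdot|s_t,a_t),\hat{P}_p(\cdot|s_t,a_t))$ paired against $V^\pi_{\hat{\mathcal{M}}_p}$. The critical observation is that this TV equals $u(s_t,a_t)$ on $\{T_t^\epsilon = 0\}$ and collapses to zero on $\{T_t^\epsilon = 1\}$, since both $P_p$ and $\hat{P}_p$ become the zero measure after truncation. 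Consequently only time indices preceding the first truncation time $t^*$ contribute, and by the very definition of the truncation indicator,
\[
\sum_{t < t^*} \gamma^t \, u(s_t,a_t) \;=\; U_{t^*-1}(s_{t^*-1},a_{t^*-1}) \;\le\; \epsilon.
\]

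The main obstacle will be keeping the prefactor on the second summand down to $\bar{r}$ rather than the $\bar{r}/(1-\gamma)$ that a naive simulation-lemma application would yield when it upper-bounds $\|V^\pi_{\hat{\mathcal{M}}_p}\|_\infty$. I expect this to be resolved by exploiting the absorbing structure: after truncation the chain terminates, so the post-truncation residual return picked up from each mismatched transition is at most $\bar{r}$ rather than the generic $\bar{r}/(1-\gamma)$. Equivalently, one can couple the trajectory distributions of $\mathcal{M}_p^\dagger$ and $\hat{\mathcal{M}}_p$ up to their respective truncation times and pay only the per-step reward $\bar{r}$ against the integrated $\sum_{t<t^*}\gamma^t u(s_t,a_t)\le \epsilon$. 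Summing the two pieces then produces the stated inequality.
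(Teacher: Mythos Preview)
Your approach is essentially the paper's: the same triangle-inequality split into an initial-distribution term (bounded via $\|V^\pi\|_\infty\le\bar r/(1-\gamma)$ and the dual form of TV) and a dynamics term handled by the observation that $P_p=\hat P_p=0$ after truncation, so only pre-truncation steps contribute and their discounted uncertainties sum to at most $\epsilon$. You are in fact more careful than the paper in flagging that a naive simulation-lemma application would cost an extra $1/(1-\gamma)$ and that the absorbing/terminating structure is what keeps the prefactor at $\bar r$; the paper's own argument for $L_2$ is quite informal on this point.
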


\begin{proof}
By definition, it is easy to find that
\begin{align*}
    &\left|J_{\rho_0} (\pi,\mathcal{M}_p) - J_{\hat{\rho}_0}(\pi,\hat{\mathcal{M}}_p) \right| \\
    &= \left| \mathbb{E}_{\rho_0}\mathbb{E}_{P_p}\left[ \sum_{t=0}^\infty \gamma^t r_p(s_t,a_t) \right] - \mathbb{E}_{\hat{\rho}_0}\mathbb{E}_{\hat{P}_p}\left[ \sum_{t=0}^\infty \gamma^t r_p(s_t,a_t) \right] \right| \\
    &=\left| \mathbb{E}_{\rho_0}\mathbb{E}_{P_p}\left[ \sum_{t=0}^\infty \gamma^t r_p(s_t,a_t) \right] - \mathbb{E}_{\hat{\rho}_0}\mathbb{E}_{P_p}\left[ \sum_{t=0}^\infty \gamma^t r_p(s_t,a_t) \right] \right. \\
    &\left. + \mathbb{E}_{\hat{\rho}_0}\mathbb{E}_{P_p}\left[ \sum_{t=0}^\infty \gamma^t r_p(s_t,a_t) \right] - \mathbb{E}_{\hat{\rho}_0}\mathbb{E}_{\hat{P}_p}\left[ \sum_{t=0}^\infty \gamma^t r_p(s_t,a_t) \right] \right| \\
    &\le \underbrace{\left| \mathbb{E}_{\rho_0}\mathbb{E}_{P_p}\left[ \sum_{t=0}^\infty \gamma^t r_p(s_t,a_t) \right] - \mathbb{E}_{\hat{\rho}_0}\mathbb{E}_{P_p}\left[ \sum_{t=0}^\infty \gamma^t r_p(s_t,a_t) \right] \right|}_{L_1} \\
    & + \underbrace{\left|\mathbb{E}_{\hat{\rho}_0}\mathbb{E}_{P_p}\left[ \sum_{t=0}^\infty \gamma^t r_p(s_t,a_t) \right] - \mathbb{E}_{\hat{\rho}_0}\mathbb{E}_{\hat{P}_p}\left[ \sum_{t=0}^\infty \gamma^t r_p(s_t,a_t) \right] \right|}_{L_2}.
\end{align*}
It is easy to notice that $|r_p(s,a)|\le \bar{r} = r_{\rm max} + \lambda u_{\rm max} + \kappa$. Then for $L_1$, we have
\begin{align*}
    L_1 &= \left| \sum (\rho_0 - \hat{\rho}_0) \sum_t \sum_{a_t} P_p(\cdot|s_t,a_t) \gamma^t r_p(s_t,a_t) \right| \\
    &\le \bar{r} \left| \sum (\rho_0 - \hat{\rho}_0) \sum_t \sum_{a_t} P_p(\cdot|s_t,a_t) \gamma^t \right| \\
    &\le \bar{r} \left| \sum (\rho_0 - \hat{\rho}_0) \sum_t \gamma^t \right| \\
    &\le \dfrac{\bar{r}}{1-\gamma} \sum\left| \rho_0 - \hat{\rho}_0 \right| = \dfrac{2\bar{r}}{1-\gamma}D_{\rm TV}(\rho_0,\hat{\rho}_0).
\end{align*}
For $L_2$, we have
\begin{align*}
    L_2 &= \left|\mathbb{E}_{\hat{\rho}_0}\mathbb{E}_{P_p}\left[ \sum_{t=0}^\infty \gamma^t r_p(s_t,a_t) \right] - \mathbb{E}_{\hat{\rho}_0}\mathbb{E}_{\hat{P}_p}\left[ \sum_{t=0}^\infty \gamma^t r_p(s_t,a_t) \right] \right| \\
    &= \left| \sum \hat{\rho}_0 \sum_t \sum_{a_t} \left( P_p(\cdot|s_t,a_t) - \hat{P}_p(\cdot|s_t,a_t) \right) \gamma^t r_p(s_t,a_t) \right|.
\end{align*}
If the accumulated uncertainty is large, i.e., $T_t^\epsilon = 1$, then $P_p(\cdot|s,a) = \hat{P}_p(\cdot|s,a)=0$. Otherwise, we have $P_p(\cdot|s,a) = P(\cdot|s,a)$ and $\hat{P}_p(\cdot|s,a)=\hat{P}(\cdot|s,a)$, respectively. We denote the termination horizon as $T$. In the $\epsilon$-Pessimistic MDP, we require that $\sum_{t=0}^T \gamma^t D_{\rm TV}(P(\cdot|s_t,a_t), \hat{P}(\cdot|s_t,a_t))\le \epsilon$. Then it holds that for any horizon $i\le T, i\in\mathbb{Z}$, we have $|\sum_{t=0}^i\sum_{a_t}\gamma^t (P(\cdot|s_t,a_t) - \hat{P}(\cdot|s_t,a_t))|\le \epsilon$ almost surely. Therefore, we have
\begin{align*}
    L_2 &= \left| \sum \hat{\rho}_0 \sum_t \sum_{a_t} \left( P_p(\cdot|s_t,a_t) - \hat{P}_p(\cdot|s_t,a_t) \right) \gamma^t r_p(s_t,a_t) \right| \\
    &\le \bar{r} \left| \sum \hat{\rho}_0 \sum_t \sum_{a_t} \left( P_p(\cdot|s_t,a_t) - \hat{P}_p(\cdot|s_t,a_t) \right) \gamma^t \right| \\
    &\le \bar{r} \cdot\epsilon\cdot \left| \sum \hat{\rho}_0\right| 
    = \bar{r}\cdot\epsilon.
\end{align*}
By combing the upper bounds of $L_1$ and $L_2$, we have the desired result immediately.
\end{proof}

We also have the following lemma:
\begin{lemma}
    \label{lemma:differencetrueandfake}
    Set $\bar{r}=r_{\rm max} + \lambda u_{\rm max} + \kappa$. Given the pessimistic MDP $\mathcal{M}_p$ and the true MDP $\mathcal{M}$, the return difference of any policy $\pi$ on the two MDPs is bounded:
    \begin{equation}
        \label{eq:differencetrueandfake}
        \begin{aligned}
        &J_{\rho_0}(\pi,\mathcal{M}_p) \ge J_{\rho_0}(\pi,\mathcal{M}) - \dfrac{\bar{r}}{1-\gamma}, \\
        &J_{\rho_0}(\pi,\mathcal{M}_p) \le J_{\rho_0}(\pi,\mathcal{M}).
        \end{aligned}
    \end{equation}
\end{lemma}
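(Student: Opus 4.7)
The plan is to condition on the (random) truncation horizon $T$, i.e., the first time index at which $T^\epsilon_t(s_t,a_t)=1$ along a trajectory sampled by $\pi$ from the true dynamics $P$ starting in $\rho_0$. Since $P_p$ agrees with $P$ for all $t<T$ and puts zero mass after that, the $\mathcal{M}_p$-return can be viewed as the truncated return
\[
J_{\rho_0}(\pi,\mathcal{M}_p) = \mathbb{E}\left[\sum_{t=0}^{T} \gamma^t r_p(s_t,a_t)\right],
\]
where the expectation is over the same trajectories used to define $J_{\rho_0}(\pi,\mathcal{M})$ (up to the absorbing-state interpretation after $T$). Subtracting gives a clean telescoping decomposition
\[
J_{\rho_0}(\pi,\mathcal{M}) - J_{\rho_0}(\pi,\mathcal{M}_p) = \mathbb{E}\left[\sum_{t=0}^{T-1}\gamma^t \lambda u(s_t,a_t) + \gamma^T\bigl(\lambda u(s_T,a_T)+\kappa\bigr) + \sum_{t=T+1}^{\infty}\gamma^t r(s_t,a_t)\right].
\]

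For the lower bound, I would bound each of the three pieces pointwise: the pre-truncation penalty is at most $\lambda u_{\max}$ per step, the terminal penalty is at most $\lambda u_{\max}+\kappa$, and the discarded tail is at most $r_{\max}$ per step by the assumption $|r|\le r_{\max}$. Summing the geometric series $\sum_{t=0}^{\infty}\gamma^t = 1/(1-\gamma)$ collects the constants into $(r_{\max}+\lambda u_{\max}+\kappa)/(1-\gamma)=\bar r/(1-\gamma)$, yielding $J_{\rho_0}(\pi,\mathcal{M})-J_{\rho_0}(\pi,\mathcal{M}_p)\le \bar r/(1-\gamma)$, which is exactly the first claimed inequality.

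For the upper bound, I would argue pointwise along every sample trajectory: $r_p(s_t,a_t)\le r(s_t,a_t)$ for all $t\le T$ since $\lambda u\ge 0$ and $\kappa\ge 0$ are subtracted, while for $t>T$ the $\mathcal{M}_p$ trajectory contributes nothing (absorbing state with zero reward) and the $\mathcal{M}$ trajectory contributes the non-negative tail $\sum_{t>T}\gamma^t r(s_t,a_t)$. Taking expectations gives $J_{\rho_0}(\pi,\mathcal{M}_p)\le J_{\rho_0}(\pi,\mathcal{M})$. This step implicitly uses that rewards are taken to be non-negative (equivalently, shifted so; such a shift leaves the ordering of policies unchanged in the discounted setting up to an additive constant on both sides).

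The main obstacle is making precise the statement "no reward accumulates after truncation in $\mathcal{M}_p$" given that Definition~\ref{def:pessimisticmdp} simply sets $P_p(\cdot\mid s_t,a_t)=0$, which is not a valid transition kernel as written. I would resolve this by the standard device of appending an absorbing dummy state $s_\perp$ with $r_p(s_\perp,\cdot)=0$ and routing all truncated mass into $s_\perp$; this does not affect the two inequalities and makes the coupling with the true trajectory (shared randomness in initial state, actions, and pre-truncation transitions) rigorous. Once this coupling is in place, everything reduces to the pointwise reward comparison and a single geometric-series bound on the tail.
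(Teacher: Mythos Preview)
Your proposal is correct and follows essentially the same approach as the paper: both arguments compare rewards pointwise along a trajectory, use $r_p\le r$ for the upper bound, and for the lower bound separate the penalty terms (bounded by $\lambda u_{\max}+\kappa$) from the discarded tail (bounded by $r_{\max}$) before summing a geometric series to obtain $\bar r/(1-\gamma)$. Your explicit coupling via an absorbing state and your remark that the upper bound tacitly requires non-negative rewards (so the post-truncation tail in $\mathcal{M}$ is $\ge 0$) are in fact more careful than the paper, which manipulates the $P_p=0$ convention informally and does not flag the sign assumption.
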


\begin{proof}
    For the first part, we have
    \begin{align*}
        &J_{\rho_0}(\pi,\mathcal{M}_p) - J_{\rho_0}(\pi,\mathcal{M}) \\
        &= \sum \rho_0 \sum_t \sum_{a_t} \gamma^t \left(P_p(\cdot|s_t,a_t)r_p(s_t,a_t) - P(\cdot|s_t,a_t)r(s_t,a_t)\right) \\
        & = \sum \rho_0 \sum_t \sum_{a_t} \gamma^t \left(P_p(\cdot|s_t,a_t)(r_p(s_t,a_t) - r(s_t,a_t))\right) \\
        & \quad + \sum \rho_0 \sum_t \sum_{a_t} \gamma^t \left( (P_p(\cdot|s_t,a_t) - P(\cdot|s_t,a_t))r(s_t,a_t)\right).
    \end{align*}
    Furthermore, we have $r_p(s,a) - r(s,a)\ge -(\lambda u_{\rm max} + \kappa)$ as $0\le u(s,a)\le u_{\rm max},\forall\, s,a$. We also notice that $P_p(\cdot|s,a)=0$ if the trajectory is truncated at $(s,a)$, and $P_p(\cdot|s,a)=P(\cdot|s,a)$ otherwise. Hence, $P_p(\cdot|s,a) - P(\cdot|s,a)\le 0$. Then we have,
    \begin{align*}
        &J_{\rho_0}(\pi,\mathcal{M}_p) - J_{\rho_0}(\pi,\mathcal{M}) \\
        &\ge -(\lambda u_{\rm max}+\kappa)\sum \rho_0 \sum_t \sum_{a_t} \gamma^t P_p(\cdot|s_t,a_t) \\
        &\qquad - r_{\rm max} \sum \rho_0 \sum_t \sum_{a_t} \gamma^t P(\cdot|s_t,a_t)) = -\dfrac{\bar{r}}{1-\gamma}.
    \end{align*}
    For the second part, using the definition of the reward function in the pessimistic MDP, the rewards obtained by any policy $\pi$ on each transition in $\mathcal{M}_p$ is less than the reward obtained in $\mathcal{M}$. Therefore, the second part holds, which concludes the proof.
\end{proof}

We then can formally prove Theorem \ref{theo:performancebound}, which is restated below.
\begin{theorem}
\label{apptheo:performancebound}
Denote $\bar{r}=r_{\rm max}+\lambda u_{\rm max}+\kappa$. Then the return of any policy $\pi$ in the $\epsilon$-Pessimistic MDP $\hat{\mathcal{M}}_p$ and its original MDP $\mathcal{M}$ satisfies:
\begin{equation}
    \begin{aligned}
        J_{\hat{\rho}_0}(\pi,\hat{\mathcal{M}}_p) \ge &J_{\rho_0}(\pi,\mathcal{M}) - \dfrac{2\bar{r}}{1-\gamma}\cdot D_{\rm TV}(\rho_0,\hat{\rho}_0) \\
        & - \bar{r}\cdot \epsilon - \dfrac{\bar{r}}{1-\gamma},
    \end{aligned}
\end{equation}
\begin{equation}
    \begin{aligned}
    J_{\hat{\rho}_0}(\pi,\hat{\mathcal{M}}_p) \le &J_{\rho_0}(\pi,\mathcal{M}) + \dfrac{2\bar{r}}{1-\gamma}\cdot D_{\rm TV}(\rho_0,\hat{\rho}_0)\\
    & + \bar{r} \cdot \epsilon.
    \end{aligned}
\end{equation}
\end{theorem}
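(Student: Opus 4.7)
The plan is to use the auxiliary pessimistic MDP $\mathcal{M}_p$ from Definition \ref{def:pessimisticmdp} as a bridge between the $\epsilon$-Pessimistic MDP $\hat{\mathcal{M}}_p$ and the true MDP $\mathcal{M}$, and then chain the two preceding lemmas via the triangle inequality. Concretely, I would compare $J_{\hat{\rho}_0}(\pi, \hat{\mathcal{M}}_p)$ to $J_{\rho_0}(\pi, \mathcal{M}_p)$ using Lemma \ref{lemma:pessimisticperformance} (which isolates the effect of the initial-state mismatch and the learned-vs-true dynamics), and then compare $J_{\rho_0}(\pi, \mathcal{M}_p)$ to $J_{\rho_0}(\pi, \mathcal{M})$ using Lemma \ref{lemma:differencetrueandfake} (which isolates the cost of pessimistic penalization and trajectory truncation under the true dynamics). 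Adding the two pairs of inequalities with appropriate signs should recover both directions of the theorem.

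\textbf{Main steps.} First I would decompose $J_{\hat{\rho}_0}(\pi, \hat{\mathcal{M}}_p) - J_{\rho_0}(\pi, \mathcal{M}) = [J_{\hat{\rho}_0}(\pi, \hat{\mathcal{M}}_p) - J_{\rho_0}(\pi, \mathcal{M}_p)] + [J_{\rho_0}(\pi, \mathcal{M}_p) - J_{\rho_0}(\pi, \mathcal{M})]$. The first bracket is controlled on both sides by Lemma \ref{lemma:pessimisticperformance}, contributing the $\frac{2\bar{r}}{1-\gamma} D_{\rm TV}(\rho_0, \hat{\rho}_0) + \bar{r}\epsilon$ slack in either direction. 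The second bracket is one-sided: Lemma \ref{lemma:differencetrueandfake} gives $J_{\rho_0}(\pi, \mathcal{M}_p) \ge J_{\rho_0}(\pi, \mathcal{M}) - \bar{r}/(1-\gamma)$ below and $J_{\rho_0}(\pi, \mathcal{M}_p) \le J_{\rho_0}(\pi, \mathcal{M})$ above. Combining these with the first bracket's two-sided bound yields the lower bound with the extra $\bar{r}/(1-\gamma)$ penalty term and the upper bound without it, exactly matching the statement.

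\textbf{Main obstacle.} The real content of the argument already lives inside the two lemmas; once they are in hand the theorem reduces to a bookkeeping step. The subtlety to watch is the asymmetry coming from Lemma \ref{lemma:differencetrueandfake}: because $r_p \le r$ pointwise and $P_p$ zeros out transitions whose truncation indicator equals one, the pessimistic value is never above the true value but can fall below it by up to $\bar{r}/(1-\gamma)$. Keeping track of this one-sidedness, and in particular applying the lower/upper part of Lemma \ref{lemma:differencetrueandfake} to the correct side of the final inequality, is the only place where signs need care.
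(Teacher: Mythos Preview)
Your proposal is correct and matches the paper's own proof essentially line for line: the paper decomposes $J_{\hat{\rho}_0}(\pi,\hat{\mathcal{M}}_p) - J_{\rho_0}(\pi,\mathcal{M})$ into the same two brackets, bounds the first with Lemma~\ref{lemma:pessimisticperformance} and the second with Lemma~\ref{lemma:differencetrueandfake}, and then combines them exactly as you describe, including the asymmetry that the extra $\bar{r}/(1-\gamma)$ term appears only in the lower bound.
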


\begin{proof}
    We decompose $J_{\hat{\rho}_0}(\pi,\hat{\mathcal{M}}_p) - J_{\rho_0}(\pi,\mathcal{M})$ as:
    \begin{align*}
        \underbrace{J_{\hat{\rho}_0}(\pi,\hat{\mathcal{M}}_p) - J_{\rho_0}(\pi,\mathcal{M}_p)}_{\rm (I)} +  \underbrace{J_{\rho_0}(\pi,\mathcal{M}_p) - J_{\rho_0}(\pi,\mathcal{M})}_{\rm (II)}.
    \end{align*}
    Then by bounding term (I) with Lemma \ref{lemma:pessimisticperformance} and bounding term (II) using Lemma \ref{lemma:differencetrueandfake}, we can conclude the proof.
\end{proof}

\subsection{Proof of Corollary \ref{coro:suboptimal} and \ref{coro:simplified}}
We restate the Corollary \ref{coro:suboptimal} below.
\begin{corollary}
\label{appcoro:suboptimal}
If the policy in the $\epsilon$-Pessimistic MDP is $\delta_\pi$ sub-optimal, i.e., $J_{\hat{\rho}_0}(\pi,\hat{\mathcal{M}}_p)\ge J_{\hat{\rho}_0}(\pi^*,\hat{\mathcal{M}}_p) - \delta_\pi$, then we have
\begin{equation*}
    \begin{aligned}
    J_{\rho_0}(\pi^*,\mathcal{M}) - J_{\rho_0}(\pi,\mathcal{M}) \le &\delta_\pi + \dfrac{4\bar{r}}{1-\gamma}\cdot D_{\rm TV}(\rho_0,\hat{\rho}_0) \\
    & +2\bar{r} \epsilon + \dfrac{\bar{r}}{1-\gamma}.
\end{aligned}
\end{equation*}
\end{corollary}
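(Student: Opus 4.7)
The plan is to reduce Corollary \ref{coro:suboptimal} to a direct telescoping application of the two-sided bound in Theorem \ref{theo:performancebound} together with the hypothesis that $\pi$ is $\delta_\pi$-suboptimal in $\hat{\mathcal{M}}_p$. The starting point is the identity
\begin{equation*}
\begin{aligned}
J_{\rho_0}(\pi^*,\mathcal{M}) - J_{\rho_0}(\pi,\mathcal{M}) = &\bigl[J_{\rho_0}(\pi^*,\mathcal{M}) - J_{\hat{\rho}_0}(\pi^*,\hat{\mathcal{M}}_p)\bigr] \\
&+ \bigl[J_{\hat{\rho}_0}(\pi^*,\hat{\mathcal{M}}_p) - J_{\hat{\rho}_0}(\pi,\hat{\mathcal{M}}_p)\bigr] \\
&+ \bigl[J_{\hat{\rho}_0}(\pi,\hat{\mathcal{M}}_p) - J_{\rho_0}(\pi,\mathcal{M})\bigr],
\end{aligned}
\end{equation*}
which inserts the $\epsilon$-Pessimistic MDP as an intermediate reference point for both the optimal and the learned policy.

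I would then bound each of the three bracketed terms separately. For the first bracket, I would rearrange the lower bound in Theorem \ref{theo:performancebound} applied to $\pi^*$ to get $J_{\rho_0}(\pi^*,\mathcal{M}) - J_{\hat{\rho}_0}(\pi^*,\hat{\mathcal{M}}_p) \le \tfrac{2\bar{r}}{1-\gamma} D_{\mathrm{TV}}(\rho_0,\hat{\rho}_0) + \bar{r}\epsilon + \tfrac{\bar{r}}{1-\gamma}$. For the middle bracket, the suboptimality hypothesis $J_{\hat{\rho}_0}(\pi,\hat{\mathcal{M}}_p)\ge J_{\hat{\rho}_0}(\pi^*,\hat{\mathcal{M}}_p)-\delta_\pi$ immediately yields $J_{\hat{\rho}_0}(\pi^*,\hat{\mathcal{M}}_p) - J_{\hat{\rho}_0}(\pi,\hat{\mathcal{M}}_p) \le \delta_\pi$. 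For the third bracket, I would use the upper bound in Theorem \ref{theo:performancebound} applied to $\pi$, giving $J_{\hat{\rho}_0}(\pi,\hat{\mathcal{M}}_p) - J_{\rho_0}(\pi,\mathcal{M}) \le \tfrac{2\bar{r}}{1-\gamma} D_{\mathrm{TV}}(\rho_0,\hat{\rho}_0) + \bar{r}\epsilon$.

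Adding the three upper bounds yields exactly the claimed inequality
\begin{equation*}
J_{\rho_0}(\pi^*,\mathcal{M}) - J_{\rho_0}(\pi,\mathcal{M}) \le \delta_\pi + \dfrac{4\bar{r}}{1-\gamma} D_{\mathrm{TV}}(\rho_0,\hat{\rho}_0) + 2\bar{r}\epsilon + \dfrac{\bar{r}}{1-\gamma}.
\end{equation*}
Because Theorem \ref{theo:performancebound} is already assumed, there is no real obstacle here; the only thing one must be careful about is applying the correct direction of the two-sided bound to each policy ($\pi^*$ on the lower-bound side to move from $\mathcal{M}$ to $\hat{\mathcal{M}}_p$, and $\pi$ on the upper-bound side to move back from $\hat{\mathcal{M}}_p$ to $\mathcal{M}$), so that the asymmetry in Theorem \ref{theo:performancebound} (the extra $\tfrac{\bar{r}}{1-\gamma}$ appearing only in the lower bound) shows up exactly once in the final inequality. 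No further properties of the learned dynamics, the uncertainty quantifier, or the truncation rule are needed beyond what Theorem \ref{theo:performancebound} already encodes.
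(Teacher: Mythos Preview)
Your proposal is correct and follows essentially the same approach as the paper: both apply the upper bound of Theorem~\ref{theo:performancebound} to $\pi$, the lower bound of Theorem~\ref{theo:performancebound} to $\pi^*$, and the $\delta_\pi$-suboptimality hypothesis in between. The only cosmetic difference is that you present it as an explicit three-term telescoping decomposition while the paper writes it as a chain of inequalities starting from $J_{\rho_0}(\pi,\mathcal{M})$.
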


\begin{proof}
    By using the results in Theorem \ref{theo:performancebound} and the assumption on the sub-optimality, we have
    \begin{align*}
        J_{\rho_0}(\pi, \mathcal{M}) &\ge J_{\hat{\rho}_0}(\pi,\hat{\mathcal{M}}_p) - \dfrac{2\bar{r}}{1-\gamma} D_{\rm TV}(\rho_0,\hat{\rho}_0) - \bar{r}\epsilon \\
        &\ge J_{\hat{\rho}_0}(\pi^*,\hat{\mathcal{M}}_p) - \delta_\pi - \dfrac{2\bar{r}}{1-\gamma} D_{\rm TV}(\rho_0,\hat{\rho}_0) - \bar{r}\epsilon \\
        & \ge J_{\rho_0}(\pi^*,\mathcal{M}) - \delta_\pi - \dfrac{4\bar{r}}{1-\gamma} D_{\rm TV}(\rho_0,\hat{\rho}_0) - 2\bar{r}\epsilon \\
        &\qquad \qquad - \dfrac{\bar{r}}{1-\gamma}.
    \end{align*}
    This concludes the proof.
\end{proof}

If the dataset is large enough, then all of the generated transitions will be in-distribution and hence admitted. Then $D_{\rm TV}(\rho_0,\hat{\rho}_0)$ and $\epsilon$ approach 0. By setting $D_{\rm TV}(\rho_0,\hat{\rho}_0)=0$ and $\epsilon=0$ in Corollary \ref{coro:suboptimal} conclude the proof of Corollary \ref{coro:simplified}.

\section{Experimental Setup and Hyperparameter Setup}
In this section, we present the detailed experimental setup for TATU. We introduce the datasets we adopted in this paper and provide a detailed hyperparameter setup we used for all of the base algorithms and our TATU.
\subsection{MuJoCo Datasets on the D4RL Benchmark}
In our evaluation, we utilize three tasks from the MuJoCo dataset, halfcheetah, hopper, and walker2d as illustrated in Figure \ref{fig:mujocodataset}. Each task contains five types of datasets: (1) \textbf{random} where samples are collected by a random policy; (2) \textbf{medium} where the samples are collected from an early-stopped SAC policy for 1M steps; (3) \textbf{medium-replay} where samples are gathered using the replay buffer of a policy that trained up to the medium-level agent; (4) \textbf{medium-expert} where samples are collected by mixing the medium-level data and expert-level data at a 50-50 ratio; (5) \textbf{expert} where the dataset is logged with an expert policy.
\begin{figure}
    \centering
    \includegraphics[width=0.31\linewidth]{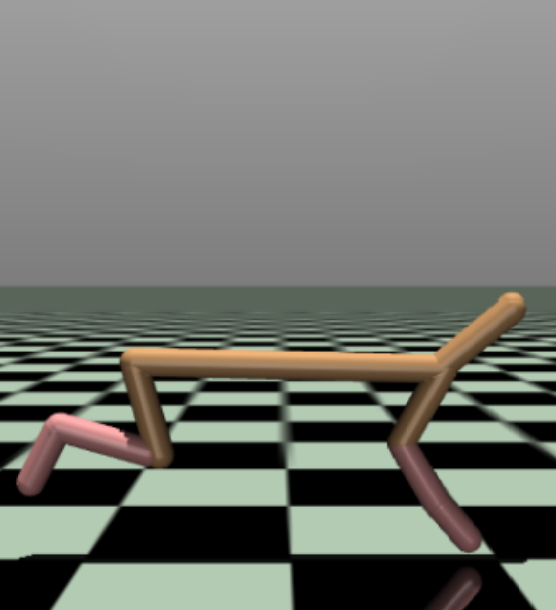}
    \includegraphics[width=0.29\linewidth]{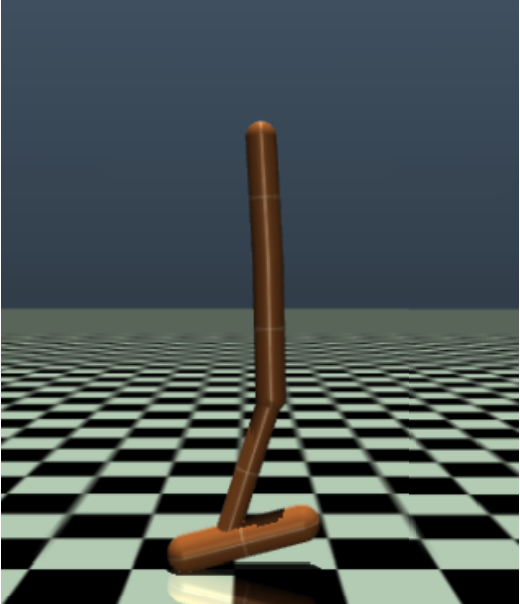}
    \includegraphics[width=0.29\linewidth]{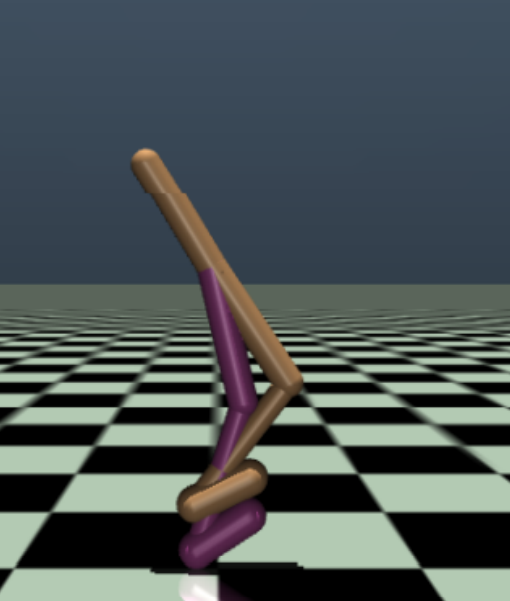}
    \caption{MuJoCo datasets from D4RL. From left to right, halfcheetah, hopper, walker2d.}
    \label{fig:mujocodataset}
\end{figure}

The normalized score is widely used for evaluating the performance of the agent on the offline dataset. Denote the expected return of a random policy as $J_{\rm random}$, and the expected return of an expert policy as $J_{\rm expert}$. Assume the return obtained by the policy $\pi$ gives $J_\pi$, then the normalized score (NS) gives:
\begin{equation}
    NS = \dfrac{J_\pi - J_{\rm random}}{J_{\rm expert} - J_{\rm random}}\times 100.
\end{equation}
It can be seen that the normalized score generally ranges from 0 to 100, where 0 corresponds to a random policy and 100 corresponds to an expert policy. The reference scores for the random policy and expert policy are previously set by D4RL.

\subsection{Implementation Details}
\label{app:implementation}
\textbf{TATU implementation details.} In our experiments, we train the forward dynamics model with a neural network parameterized by $\psi$ to output a multivariate Gaussian distribution for predicting the next state, 
\begin{equation*}
    \hat{p}_\psi(s^\prime|s,a) = \mathcal{N}(\mu_\theta(s,a),\Sigma_\phi(s,a)), \psi=\{\theta,\phi\},
\end{equation*} 
where $\mu_\theta(s,a)$ and $\Sigma_\phi(s,a)$ denote the mean and variance of the forward model, respectively. As mentioned in the main text, we model the difference in the current state and the next state to ensure local continuity. We model the probabilistic neural network with a multi-layer neural network that consists of 4 feedforward layers with 200 hidden units. Each intermediate layer in the network adopts swish activation. We train an ensemble of seven such neural networks by following prior work \cite{Janner2019WhenTT}. We use a validation set of 1000 transitions to validate the performance of the learned dynamics models. When generating imaginations, we first select the best five models out of seven candidates and then randomly pick one from them for imagined trajectory generation. For other parameters that are not quite relevant to TATU (e.g., hyperparameters of the base algorithms), we simply follow their default setting. We then give more experimental details on the key parameters in TATU as shown below:

\textbf{Reward penalty coefficient $\lambda$.}  $\lambda$ determines how large penalty we add to the generated samples, since the pessimistic reward function in the $\epsilon$-Pessimistic MDP gives $r_p(s,a) = r(s,a) - \lambda u(s_t,a_t)$. Larger $\lambda$ will inject more conservatism into the imagined samples. Considering the fact that there are many other important hyperparameters in TATU (e.g., threshold coefficient $\alpha$, real data ratio $\eta$, etc.), we choose to fix this parameter. To be specific, we set $\lambda=1.0$ on all of the evaluated tasks. One may possibly get better results by carefully tuning this parameter, and one can also refer to the suggested $\lambda$ adopted in the MOPO's official codebase (\href{https://github.com/tianheyu927/mopo}{https://github.com/tianheyu927/mopo}).

\textbf{Threshold coefficient $\alpha$.} The threshold coefficient $\alpha$ is one of the most important hyperparameters for TATU under different datasets and tasks. Note that the uncertainty threshold $\epsilon = \frac{1}{\alpha}\max_{i\in\{1,\ldots,|\mathcal{D}|\}}u(s_i,a_i)$. $\alpha$ thus controls how tolerable we are to the imagined trajectory. Larger $\alpha$ indicates that only very conservative trajectories that lie in the span of the dataset can be trusted, and smaller $\alpha$ will admit more trajectories. Generally, we set $\alpha\ge 1$ such that the accumulated uncertainty along the imagined trajectory will not exceed the maximum uncertainty in the real dataset, which ensures the reliability of the samples in the model buffer $\mathcal{D}_{\rm model}$. Empirically, we find that setting $\alpha=2$ can incur very satisfying performance on most of the evaluated datasets. Some exceptions are hopper-medium-v2, hopper-medium-expert-v2, and hopper-expert-v2 where $\alpha = 3.5$ is set to involve more pessimistic samples. 



\textbf{Rollout horizon $h$.} The rollout horizon decides how far can the synthetic trajectory branch from the starting state. Intuitively, a larger rollout horizon $h$ may involve more diverse transition data (as the trajectory is longer). In our main experiments, we set rollout horizon $h=5$ for all the datasets from MuJoCo, i.e., halfcheetah, hopper, and walker2d. In the main text, we conduct a parameter study with respect to $h$ in Section \ref{sec:parameterstudy} on hopper-medium-replay-v2. Furthermore, in this part, we conduct additional experiments using TATU+TD3\_BC on halfcheetah-expert-v2. The results are shown in Figure \ref{fig:horizonhalfcheetahexpert}. We find that TATU+TD3\_BC prefers a smaller horizon $h=1$ while it seems a larger horizon is better for TATU+MOPO and TATU+TD3\_BC on hopper-medium-replay-v2. This is due to the fact that generated samples are more likely to be OOD on expert datasets (with narrow span). On the medium-replay dataset, a longer horizon can still ensure that the imaginations are in-distribution and hence the base algorithms can benefit from more conservative imaginations (i.e., better generalization ability).


\begin{figure}
    \centering
    \includegraphics[width=0.80\linewidth]{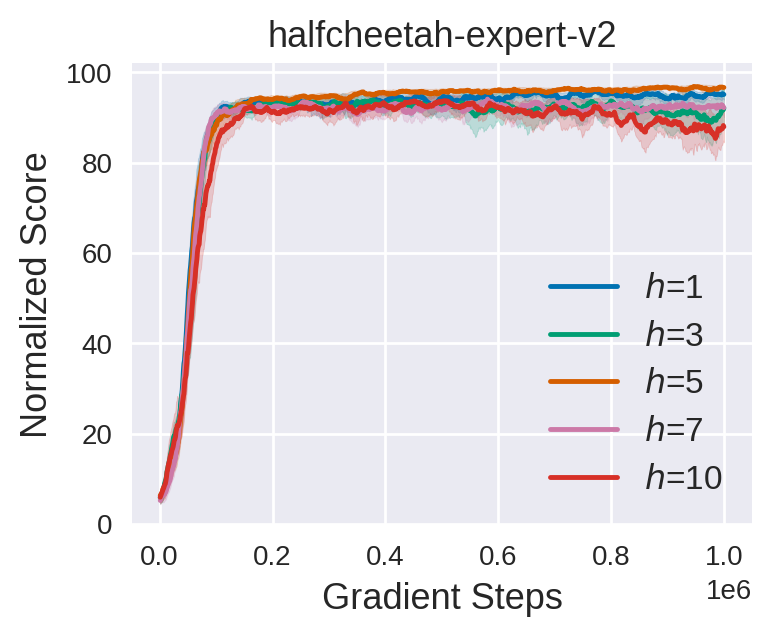}
    \caption{Performance of TATU+TD3\_BC on halfcheetah-expert-v2 with different $h$. The results are averaged over 5 runs. We report the mean performance and the standard deviation here.}
    \label{fig:horizonhalfcheetahexpert}
\end{figure}

\textbf{Real data ratio $\eta$.} When updating the offline policy, we sample a mini-batch (of size $B$) from $\mathcal{D}\cup\mathcal{D}_{\rm model}$. Then we sample $\eta B$ samples from the real dataset $\mathcal{D}$ and $(1-\eta)B$ samples from the model buffer $\mathcal{D}_{\rm model}$ given the real data ratio $\eta\in[0,1]$. When combining TATU with model-based offline RL methods, we follow their original parameter setup for real data ratio, i.e., $\eta=0.05$ for TATU+MOPO and $\eta=0.5$ for TATU+COMBO across all the datasets. As for the combination with model-free offline RL, $\eta$ strongly depends on the type of the dataset, i.e., small $\eta$ is better for datasets with poor quality while larger $\eta$ is preferred for datasets with high quality (e.g., expert datasets). We provide the detailed parameter setting for $\eta$ in Table \ref{apptab:realdataratio}. For most of the datasets, we use $\eta=0.7$ or $\eta=0.9$ which we find can ensure a quite satisfying performance. We show in Figure \ref{fig:main_realdataratiohoppermediumreplay} the learning curves of TATU+MOPO and TATU+TD3\_BC under different real data ratios $\eta\in\{0.05,0.25,0.5,0.7,0.9\}$. We further illustrate in Figure \ref{fig:real-ratio} the performance of TATU+TD3\_BC on halfcheetah-expert-v2 under different real data ratios. The result is consistent with our analysis above.


\begin{figure}
    \centering
    \includegraphics[width=0.80\linewidth]{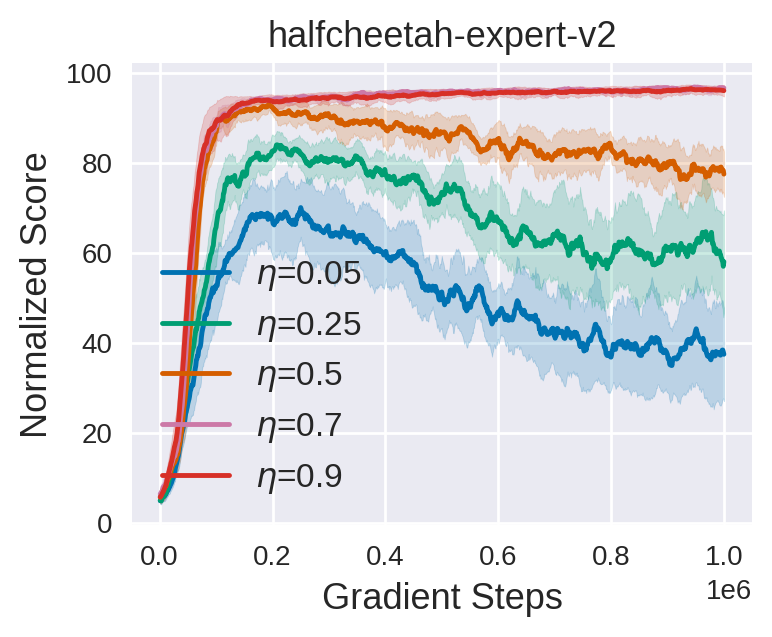}
    \caption{Performance of TATU+TD3\_BC on halfcheetah-expert-v2 with different $\eta$. The results are averaged over 5 runs and the shaded area denotes the standard deviation.}
    \label{fig:real-ratio}
\end{figure}

\begin{table}
\caption{Real data ratio $\eta$ adopted in TATU when combined with model-free offline RL algorithms. r = random, m = medium, m-r = medium-replay, m-e = medium-expert, e = expert. All experiments are run on D4RL MuJoCo ``v2" datasets.}
\label{apptab:realdataratio}
\setlength\tabcolsep{3pt}
\centering
\begin{tabular}{cccc}
\toprule
Task Name      &TATU+TD3\_BC  &TATU+CQL   &TATU+BCQ     \\ 
\midrule
half-r              & 0.7     & 0.7    & 0.7          \\
hopper-r            & 0.1     & 0.7    & 0.7                 \\
walker2d-r          & 0.1     & 0.7    & 0.7            \\
half-m-r            & 0.5     & 0.9    & 0.9            \\
hopper-m-r          & 0.7     & 0.7    & 0.7               \\
walker2d-m-r        & 0.5     & 0.9    & 0.9                   \\
half-m              & 0.7     & 0.9    & 0.9               \\
hopper-m            & 0.9     & 0.9    & 0.9               \\
walker2d-m          & 0.7     & 0.9    & 0.9               \\
half-m-e            & 0.7     & 0.9    & 0.9              \\
hopper-m-e          & 0.9     & 0.9    & 0.9                \\
walker2d-m-e        & 0.7     & 0.9    & 0.9               \\
half-e              & 0.7     & 0.9    & 0.9               \\
hopper-e            & 0.9     & 0.9    & 0.9               \\
walker2d-e          & 0.7     & 0.9    & 0.9               \\
\bottomrule
\end{tabular}
\end{table}

\section{Compute Infrastructure}
We list our compute infrastructure below.
\begin{itemize}
    \item Python: 3.7.13
    \item Pytorch: 1.12.0+cu113
    \item Gym: 0.24.1
    \item MuJoCo: 2.3.0
    \item D4RL: 1.1
    \item TensorFlow: 2.10.0
    \item GPU: RTX3090 ($\times 8$)
    \item CPU: AMD EPYC 7452
\end{itemize}

\section{Full Experimental Comparison}
\label{sec:fullcomparison}
Please refer to Table \ref{apptab:resultsonmodelbased} and \ref{apptab:resultsonmodelfree} the full experimental comparison of TATU against baseline methods when combined with model-based offline RL algorithms and model-free offline RL algorithms, respectively.

\begin{sidewaystable*}
\caption{Normalized average score comparison of TATU+MOPO and TATU+COMBO against their base algorithms and some recent baselines on the D4RL MuJoCo ``-v2" dataset. half = halfcheetah, r = random, m = medium, m-r = medium-replay, m-e = medium-expert. Each algorithm is run for 1M gradient steps with 5 different random seeds. We report the final performance. $\pm$ captures the standard deviation. We bold the top 2 score of the left part and the best score of the right part.}
\label{apptab:resultsonmodelbased}
\centering
\begin{tabular}{ccccc|cccccc}
\toprule
Task Name                 & TATU+MOPO   & MOPO      & TATU+COMBO   & COMBO & BC & CQL & IQL & DT & MOReL \\ 
\midrule
half-r        & 33.3 $\pm$ 2.6  & \textbf{35.9}$\pm$2.9 & 29.3 $\pm$ 2.7   & \textbf{38.8}$\pm$3.7 & 2.2$\pm$0.0 & 17.5$\pm$1.5 & 13.1$\pm$1.3 & - &  \textbf{38.9}$\pm$1.8 \\
hopper-r            & \textbf{31.3} $\pm$ 0.6  & 16.7$\pm$12.2 & \textbf{31.6} $\pm$ 0.6   & 17.9$\pm$1.4 & 3.7$\pm$0.6 & 7.9$\pm$0.4 & 7.9$\pm$0.2 & - & \textbf{38.1}$\pm$10.1 \\
walker2d-r           & \textbf{10.4} $\pm$ 0.7  & 4.2$\pm$5.7  &    5.3 $\pm$ 0.0          & \textbf{7.0}$\pm$3.6  & 1.3$\pm$0.1 & 5.1$\pm$1.3 & 5.4$\pm$1.2 & - &  \textbf{16.0}$\pm$7.7  \\
half-m-r & \textbf{67.2} $\pm$ 3.3  & \textbf{69.2}$\pm$1.1 & 57.8 $\pm$ 2.7   & 55.1$\pm$1.0 & 37.6$\pm$2.1 & \textbf{45.5}$\pm$0.7 & 44.2$\pm$1.2 & 36.6$\pm$0.8 & 44.5$\pm$5.6 \\
hopper-m-r      & \textbf{104.4} $\pm$ 0.9 & 32.7$\pm$9.4 & \textbf{100.7} $\pm$ 1.3   & 89.5$\pm$1.8 & 16.6$\pm$4.8 & 88.7$\pm$12.9 & \textbf{94.7}$\pm$8.6 & 82.7$\pm$7.0 & 81.8$\pm$17.0 \\
walker2d-m-r    & \textbf{75.3} $\pm$ 0.2  & 73.7$\pm$9.4 & \textbf{75.3} $\pm$ 1.7   & 56.0$\pm$8.6 & 20.3$\pm$9.8 & \textbf{81.8}$\pm$2.7 & 73.8$\pm$7.1 & 66.6$\pm$3.0 & 40.8$\pm$20.4 \\
half-m        & 61.9 $\pm$ 2.9  & \textbf{73.1}$\pm$2.4 & \textbf{69.2} $\pm$ 2.8   & 54.2$\pm$1.5 & 43.2$\pm$0.6 & 47.0$\pm$0.5 & 47.4$\pm$0.2 & 42.6$\pm$0.1 & \textbf{60.7}$\pm$4.4 \\
hopper-m             & \textbf{104.3} $\pm$ 1.3 & 38.3$\pm$34.9 & \textbf{100.0} $\pm$ 1.3  & 97.2$\pm$2.2 & 54.1$\pm$3.8 & 53.0$\pm$28.5 & 66.2$\pm$5.7 & 67.6$\pm$1.0 & \textbf{84.0}$\pm$17.0 \\
walker2d-m           & \textbf{77.9} $\pm$ 1.6  & 41.2$\pm$30.8 & 77.4 $\pm$ 0.9   & \textbf{81.9}$\pm$2.8  & 70.9$\pm$11.0 & 73.3$\pm$17.7 & \textbf{78.3}$\pm$8.7  & 74.0$\pm$1.4 & 72.8$\pm$11.9 \\
half-m-e & 74.1 $\pm$ 1.4  & 70.3$\pm$21.9 & \textbf{96.4} $\pm$ 3.6   & \textbf{90.0}$\pm$5.6  & 44.0$\pm$1.6 & 75.6$\pm$25.7 & 86.7$\pm$5.3 & \textbf{86.8}$\pm$1.3 & 80.4$\pm$11.7 \\
hopper-m-e      & \textbf{107.0} $\pm$ 1.3 & 60.6$\pm$32.5 & 106.5 $\pm$ 0.4 & \textbf{111.1}$\pm$2.9 & 53.9$\pm$4.7 & 105.6$\pm$12.9 & 91.5$\pm$14.3 & \textbf{107.6}$\pm$1.8 & 105.6$\pm$8.2 \\
walker2d-m-e    & \textbf{107.9} $\pm$ 0.9 & 77.4$\pm$27.9 & \textbf{114.6} $\pm$ 0.7  & 103.3$\pm$5.6 & 90.1$\pm$13.2 & 107.9$\pm$1.6 & \textbf{109.6}$\pm$1.0 & 108.1$\pm$0.2 & 107.5$\pm$5.6 \\
\midrule
Average score & \textbf{71.3} & 49.4 & \textbf{72.0} & 66.8 & 36.5 & 59.1 & 59.9 & - & \textbf{64.3} \\
\bottomrule
\end{tabular}
\end{sidewaystable*}

\begin{sidewaystable*}
\caption{Normalized average score comparison of TATU+TD3\_BC, TATU+CQL and TATU+BCQ against their base algorithms and some recent baselines on the D4RL MuJoCo ``-v2" dataset. half = halfcheetah, r = random, m = medium, m-r = medium-replay, m-e = medium-expert, e = expert. Each algorithm is run for 1M gradient steps across 5 different random seeds and the final mean performance is reported. $\pm$ captures the standard deviation. We bold the top 3 score of the left part and the best score of the right part.}
\label{apptab:resultsonmodelfree}
\centering
\begin{tabular}{ccccccc|ccc}
\toprule
Task Name                 & TATU+TD3\_BC             & TD3\_BC     & TATU+CQL                & CQL    &TATU+BCQ                 &BCQ  &  BC & IQL & DT \\
\midrule
half-r        & \textbf{12.1}$\pm$2.3  & 11.0$\pm$1.1        & \textbf{27.4}$\pm$2.6 & \textbf{17.5}$\pm$1.5   & 2.3$\pm$2.3             &2.2$\pm$0.0 & 2.2$\pm$0.0 & \textbf{13.1}$\pm$1.3 & -   \\
hopper-r             & \textbf{31.6}$\pm$0.6  & 8.5$\pm$0.6         & \textbf{32.3}$\pm$0.7 & 7.9$\pm$0.4    & \textbf{10.3}$\pm$0.8   &7.8$\pm$0.6 & 3.7$\pm$0.6 & \textbf{7.9}$\pm$0.2 & -  \\
walker2d-r           & \textbf{21.4}$\pm$0.0  & 1.6$\pm$1.7         & \textbf{23.0}$\pm$0.0 & \textbf{5.1}$\pm$1.3   & 3.4$\pm$0.4              &4.9$\pm$0.1 & 1.3$\pm$0.1 & \textbf{5.4}$\pm$1.2 & -   \\
half-m-r & \textbf{45.9}$\pm$0.6  & 44.6$\pm$0.5                & \textbf{48.0}$\pm$0.7         & \textbf{45.5}$\pm$0.7  & 43.5$\pm$0.3    &42.2$\pm$0.9 & 37.6$\pm$2.1 & \textbf{44.2}$\pm$1.2 & 36.6$\pm$0.8  \\
hopper-m-r      & 65.7$\pm$3.9  & 60.9$\pm$18.8        & \textbf{96.8}$\pm$2.6 & \textbf{88.7}$\pm$12.9  & \textbf{72.9}$\pm$0.4   &60.9$\pm$14.7 & 16.6$\pm$4.8 & \textbf{94.7}$\pm$8.6 & 82.7$\pm$7.0  \\
walker2d-m-r    & \textbf{81.9}$\pm$2.7  & \textbf{81.8}$\pm$5.5        &\textbf{85.5}$\pm$1.2 & \textbf{81.8}$\pm$2.7   & 77.7$\pm$1.0   & 57.0$\pm$9.6 & 20.3$\pm$9.8 & \textbf{73.8}$\pm$7.1 & 66.6$\pm$3.0  \\
half-m        & \textbf{48.4}$\pm$2.7  & \textbf{48.3}$\pm$0.3          & 44.9$\pm$0.3                & 47.0$\pm$0.5  & \textbf{47.6}$\pm$0.2    &46.6$\pm$0.4 & 43.2$\pm$0.6 & \textbf{47.4}$\pm$0.2 & 42.6$\pm$0.1  \\
hopper-m             & \textbf{62.0}$\pm$1.0  & 59.3$\pm$4.2        & \textbf{68.9}$\pm$0.9 & 53.0$\pm$28.5  & \textbf{71.0}$\pm$0.3   &59.4$\pm$8.3 & 54.1$\pm$3.8 & 66.2$\pm$5.7 & \textbf{67.6}$\pm$1.0 \\
walker2d-m           & \textbf{84.3}$\pm$0.2  & \textbf{83.7}$\pm$2.1        &  65.7$\pm$0.5  & 73.3$\pm$17.7 & \textbf{80.5}$\pm$0.4  &71.8$\pm$7.2 & 70.9$\pm$11.0 & \textbf{78.3}$\pm$8.7 & 74.0$\pm$1.4 \\
half-m-e & \textbf{97.1}$\pm$3.2  & 90.7$\pm$4.3        & 78.9$\pm$0.5 & 75.6$\pm$25.7   & \textbf{96.1}$\pm$0.2   &\textbf{95.4}$\pm$2.0  & 44.0$\pm$1.6 & 86.7$\pm$5.3 & \textbf{86.8}$\pm$1.3 \\
hopper-m-e      & \textbf{113.0}$\pm$1.7 & 98.0$\pm$9.4        & \textbf{111.5}$\pm$1.0 & 105.6$\pm$12.9 & \textbf{108.2}$\pm$0.3 &106.9$\pm$5.0 & 53.9$\pm$4.7 & 91.5$\pm$14.3 & \textbf{107.6}$\pm$1.8 \\
walker2d-m-e    & \textbf{110.9}$\pm$0.5 & 110.1$\pm$0.5       & \textbf{110.2}$\pm$0.1  & 107.9$\pm$1.6 & \textbf{111.7}$\pm$0.3 &107.7$\pm$3.8 & 90.1$\pm$13.2 & \textbf{109.6}$\pm$1.0 & 108.1$\pm$0.2 \\
half-e        & \textbf{97.4}$\pm$0.4  & \textbf{96.7}$\pm$1.1        & 90.8$\pm$3.4   &  \textbf{96.3}$\pm$1.3 & \textbf{96.3}$\pm$1.4  &89.9$\pm$9.6  & 91.8$\pm$1.5 & \textbf{95.0}$\pm$0.5 & -  \\
hopper-e             & \textbf{111.8}$\pm$1.1 & \textbf{107.8}$\pm$7       & 106.8$\pm$0.9  & 96.5$\pm$28.0 & 103.9$\pm$0.6         &\textbf{109.0}$\pm$4.0 & 107.7$\pm$0.7 & \textbf{109.4}$\pm$0.5 & - \\
walker2d-e           & \textbf{110.0}$\pm$0.1         & \textbf{110.2}$\pm$0.3 & 108.3$\pm$0.1 & 108.5$\pm$0.5 & \textbf{109.7}$\pm$0.3 &106.3$\pm$5.0 & 106.7$\pm$0.2 &  \textbf{109.9}$\pm$1.2 & - \\
\midrule
Average score & \textbf{72.9} & 67.5 & \textbf{73.3} & 67.3 & \textbf{69.0} & 64.5 & 49.6 & \textbf{68.9} & - \\
\bottomrule
\end{tabular}
\end{sidewaystable*}

\end{document}